\def\eqref#1{equation~\ref{#1}}
\def\1{\bm{1}}
\DeclareMathAlphabet{\mathsfit}{\encodingdefault}{\sfdefault}{m}{sl}
\SetMathAlphabet{\mathsfit}{bold}{\encodingdefault}{\sfdefault}{bx}{n}
\def\tT{{\tens{T}}}
\let\classAND\AND
\let\AND\relax
\let\AND\classAND
\theoremstyle{definition}
\newtheorem{definition}{Definition}
\newtheorem{assumption}{Assumption}
\theoremstyle{remark}
\title{Pave Your Own Path: Graph Gradual Domain Adaptation on Fused Gromov-Wasserstein Geodesics}
\author{\name Zhichen Zeng \email zhichenz@illinois.edu \\
      \addr University of Illinois Urbana-Champaign
      \AND
      \name Ruizhong Qiu \email rq5@illinois.edu \\
      \addr University of Illinois Urbana-Champaign
      \AND
      \name Wenxuan Bao \email wbao4@illinois.edu \\
      \addr University of Illinois Urbana-Champaign
      \AND
      \name Tianxin Wei \email twei10@illinois.edu \\
      \addr University of Illinois Urbana-Champaign
      \AND
      \name Xiao Lin \email xiaol13@illinois.edu \\
      \addr University of Illinois Urbana-Champaign
      \AND
      \name Yuchen Yan \email yucheny5@illinois.edu \\
      \addr University of Illinois Urbana-Champaign
      \AND
      \name Tarek F. Abdelzaher \email zaher@illinois.edu \\
      \addr University of Illinois Urbana-Champaign
      \AND
      \name Jiawei Han \email hanj@illinois.edu \\
      \addr University of Illinois Urbana-Champaign
      \AND
      \name Hanghang Tong \email htong@illinois.edu \\
      \addr University of Illinois Urbana-Champaign
      }
\newcommand{\beqa}{\begin{eqnarray}}
\newcommand{\eeqa}{\end{eqnarray}}
\newcommand{\beq}{\begin{equation}}
\newcommand{\eeq}{\end{equation}}
\newcommand{\ben}{\begin{enumerate}}
\newcommand{\een}{\end{enumerate}}
\newcommand{\bit}{\begin{itemize}}
\newcommand{\eit}{\end{itemize}}
\newcommand{\bi}{\begin{itemize} \item}
\newcommand{\ei}{\end{itemize}}
\newcommand{\begindef}{\begin{Definition} \rm}
\newcommand{\beginexa}{\begin{Example} \rm}
\newcommand{\beginthe}{\begin{Theorem} \rm}
\newcommand{\beginpro}{\begin{Proposition} \rm}
\newcommand{\beginlem}{\begin{Lemma} \rm}
\newcommand{\begincon}{\begin{Conjecture} \rm}
\newcommand{\begincor}{\begin{Corollary} \rm}
\newcommand{\eat}[1]{}
\def\papernumber #1 raised #2 {
\vspace{-#2}
\vbox to 0pt{\hfill\framebox{\bf Paper Number #1}}
\vspace{#2}
}
\def\T{{\scriptscriptstyle\mathsf{T}}}
\def\tT{\frac{t}{T}}
\def\H{\mathcal{H}}
\def\G{\mathcal{G}}
\def\V{\mathcal{V}}
\def\A{\bm{A}}
\def\X{\bm{X}}
\def\P{\bm{P}}
\def\Q{\bm{Q}}
\def\TG{\Tilde{\mathcal{G}}}
\def\TA{\Tilde{\bm{A}}}
\def\TX{\Tilde{\bm{X}}}
\def\w{d_{\textnormal{W}}}
\def\fgw{d_{\textnormal{FGW}}}
\def\cc{C_\textnormal c}
\def\cl{C_\textnormal{lin}}
\def\cn{C_\textnormal n}
\def\cg{C_\textnormal g}
\def\ce{C_\textnormal e}
\def\cf{C_{\textnormal{f}}}
\def\cfn{C_{\textnormal{f}_n}}
\def\cfg{C_{\textnormal{f}_g}}
\def\cfe{C_{\textnormal{f}_e}}
\def\cwn{C_{\textnormal{W}_n}}
\def\cwg{C_{\textnormal{W}_g}}
\def\cwe{C_{\textnormal{W}_e}}
\def\cp{C_{\phi}}
\def\gcn{\textnormal{gcn}}
\def\algname{\textsc{Gadget}}
\begin{document}

\maketitle

\begin{abstract}
    Graph neural networks, despite their impressive performance, are highly vulnerable to distribution shifts on graphs.
    Existing graph domain adaptation (graph DA) methods often implicitly assume a \textit{mild} shift between source and target graphs, limiting their applicability to real-world scenarios with \textit{large} shifts.
    Gradual domain adaptation (GDA) has emerged as a promising approach for addressing large shifts by gradually adapting the source model to the target domain via a path of unlabeled intermediate domains.
    Existing GDA methods exclusively focus on independent and identically distributed (IID) data with a predefined path, leaving their extension to \textit{non-IID graphs without a given path} an open challenge.
    To bridge this gap, we present \algname, the first GDA framework for non-IID graph data.
    First (\textit{theoretical foundation}), the Fused Gromov-Wasserstein (FGW) distance is adopted as the domain discrepancy for non-IID graphs, based on which, we derive an error bound on node, edge and graph-level tasks, showing that the target domain error is proportional to the length of the path.
    Second (\textit{optimal path}), guided by the error bound, we identify the FGW geodesic as the optimal path, which can be efficiently generated by our proposed algorithm.
    The generated path can be seamlessly integrated with existing graph DA methods to handle large shifts on graphs, improving state-of-the-art graph DA methods by up to 6.8\% in accuracy on real-world datasets.
\end{abstract}
\addtocontents{toc}{\protect\setcounter{tocdepth}{-1}}
\vspace{-10pt}
\section{Introduction}\label{sec:intro}
\vspace{-5pt}

In the era of big data and AI, graphs have emerged as a powerful tool for modeling relational data.
Graph neural networks (GNNs) have achieved remarkable success in numerous graph learning tasks such as graph classification~\cite{xu2018powerful}, node classification~\cite{kipf2017semi}, and link prediction~\cite{zhang2018link}.
Their superior performance largely relies on the fundamental assumption that training and test graphs are identically distributed, whereas the large distribution shifts on real-world graphs significantly undermine GNN performance~\cite{li2022out}. 

To address this issue, graph domain adaptation (graph DA) aims to adapt the trained source GNN model to a test target graph~\cite{wu2023non,liu2023structural}.
Promising as it might be, existing graph DA methods follow a fundamental assumption that the source and target graphs bear \textit{mild} shifts, while real-world graphs could suffer from \textit{large} shifts in both node attributes and graph structure \cite{hendrycks2021unsolved,shi2024graph}.
For example, user profiles are likely to vary from different research platforms (e.g., ACM and DBLP), resulting in attribute shifts on citation networks.
In addition, while Instagram users are prone to connect with close friends, users tend to connect to business partners on LinkedIn, leading to structure shifts on social networks.
To handle large shifts, gradual domain adaptation (GDA) has emerged as a promising approach~\cite{kumar2020understanding,wang2022understanding,he2023gradual}.
The key idea is to gradually adapt the source model to the target domain via a path of unlabeled intermediate domains, such that the mild shifts between successive domains are easy to handle.
Existing GDA approaches exclusively focus on independent and identically distributed (IID) data, e.g., images, with a predefined path~\cite{kumar2020understanding,wang2022understanding}, however, the extension of GDA to non-IID graphs without a predefined path remains an open challenge.
Therefore, a question naturally arises:

\noindent\textit{How to perform GDA on graphs such that large graph shifts can be effectively handled?}

\textbf{Contributions.}
In this work, we focus on the unsupervised graph DA and propose \algname, the first GDA framework for non-IID graphs with large shifts.
An illustration of \algname\ is shown in Figure~\ref{fig:overview}.
While direct graph DA fails when facing large shifts (Figure~\ref{fig:overview}(a)), \algname\ gradually adapts the GNN model via unlabeled intermediate graphs based on self-training (Figure~\ref{fig:overview}(b)), achieving significant improvement on graph DA methods on real-world graphs (Figure~\ref{fig:overview}(c)).
Specifically, to measure the domain discrepancy between non-IID graphs, we adopt the prevalent Fused Gromov-Wasserstein (FGW) distance~\cite{titouan2019optimal} considering both node attributes and connectivity, such that the node dependency, i.e., non-IID property, of graphs can be modeled.
Afterwards (\textit{theoretical foundation}), we derive an error bound for graph GDA, revealing the close relationship between the target domain error and the length of the path.
Furthermore (\textit{optimal path}), based on the established error bound, we prove that the FGW geodesic minimizing the path length provides the optimal path for graph GDA.
To address the lack of path in graph learning tasks, we propose a fast algorithm to generate intermediate graphs on the FGW geodesics, which can be seamlessly integrated with various graph DA baselines to handle large graph shifts.
Finally (\textit{empirical evaluation}), we carry out experiments on node-level classification, and the results demonstrate the effectiveness of our proposed \algname, significantly improving graph DA methods by up to 6.8\% in classification accuracy.

\begin{figure*}[t]
    \centering
    \includegraphics[width=\linewidth, trim=20 10 0 0, clip]{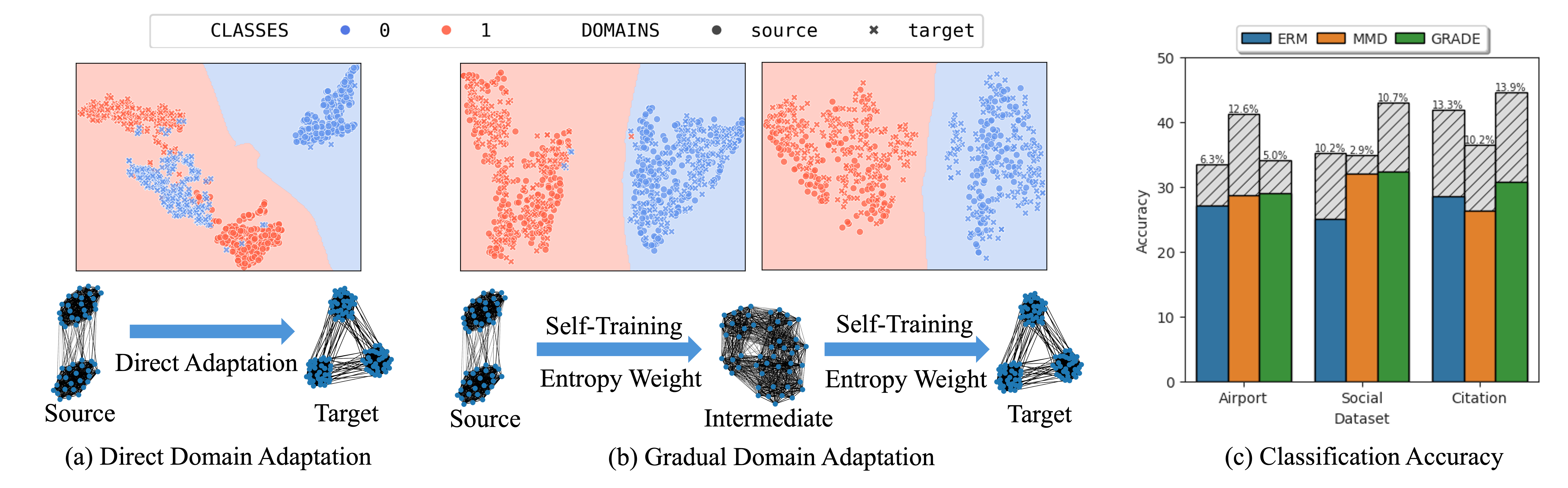}
    \vspace{-15pt}
    \caption{An illustration of graph GDA.
    Figures (a-b) show the node embeddings, whose colors (blue and red) indicate classes and shapes ({$\bullet$} and {$\times$}) indicate domains, and the decision boundary.
    (a): Direct adaptation fails when facing large shifts as all target nodes in class 0 (\textcolor{blue}{$\times$}) are misclassified.
    (b): Gradual adaptation successfully handles large shift by decomposing it into intermediate domains on the FGW geodesics with mild shifts, where all target nodes in class 0 (\textcolor{blue}{$\times$}) are correctly separated from those in class 1 (\textcolor{red}{$\times$}).
    (c): Bars w/ and w/o hatches show the performance of direct adaptation and GDA, respectively. 
    Number over bars are the absolute improvement on accuracy.
    Our proposed \algname\ significantly improves various graph DA methods on real-world datasets.
    }
    \vspace{-15pt}
    \label{fig:overview}
\end{figure*}

\section{Related Works}\label{sec:related}

\noindent\textbf{Graph Domain Adaptation.} Graph DA transfers knowledge between graphs with different distributions and can be broadly categorized into data and model adaptation.
For data adaptation, shifts between source and target graphs are mitigated via deep transformation~\cite{jinempowering,sui2023unleashing}, edge re-weighting~\cite{liu2023structural} and graph alignment\cite{liu2024pairwise}.
For model adaptation, various general domain discrepancies, e.g., MMD~\cite{gretton2012kernel} and CORAL~\cite{sun2016return}, and graph domain discrepancies~\cite{zhu2021transfer,wu2023non,you2023graph}, are proposed to align the source and target distributions.
In addition, adversarial approaches~\cite{dai2022graph,zhang2019bridging} learn domain-adaptive embeddings that are robust to domain shifts.
However, existing graph DA methods only handle mild shifts between source and target, limiting their application to real-world large shifts.

\noindent\textbf{Gradual Domain Adaptation.}
GDA tackles large domain shifts by leveraging gradual transitions along intermediate domains.
GDA is first studied in~\cite{kumar2020understanding}, where the self-training paradigm and its error bound, are proposed.
More in-depth theoretical insights~\cite{wang2022understanding} identify optimal paths, achieving trade-offs between efficiency and effectiveness.
More recent studies generalize GDA to scenarios without well-defined intermediate domain by either selecting from a candidate pool~\cite{chen2021gradual} or generating from scratch~\cite{he2023gradual}.
However, existing GDA methods exclusively focus on IID data, whereas the extension to non-IID graph data is largely un-explored.
\section{Preliminaries}\label{sec:prelim}
In this section, we first introduce the notations in Section~\ref{sec:notation}, based on which, preliminaries on the FGW space and graph DA are introduced in Sections~\ref{sec:fgw} and \ref{sec:graph_da}, respectively.

\subsection{Notations}\label{sec:notation}
We use bold uppercase letters for matrices (e.g., $\bm{A}$), bold lowercase letters for vectors (e.g., $\bm{s}$), calligraphic letters for sets (e.g., $\G$), and lowercase letters for scalars (e.g., $\alpha$).
The element $(i,j)$ of a matrix $\bm{A}$ is denoted as $\bm{A}(i,j)$. The transpose of $\bm{A}$ is denoted by the superscript $\T$ (e.g., $\bm{A}^{\T}$).

We use $\mathcal{X}$ for feature space and $\mathcal{Y}$ for prediction space, with their respective norms as $\|\cdot\|_\mathcal{X}$ and $\|\cdot\|_\mathcal{Y}$.
A graph $\G=(\V,\A,\X)$ has node set $\V$, adjacency matrix $\A\in\mathbb{R}^{|\V| \times |\V|}$ and node feature matrix $\X\in\mathcal{X}^{|\V|}$.
Let $\mathfrak{G}$ denote the space of all graphs, a GNN is a function $f:\mathfrak G\to\mathcal{Y}^{|\V|}$ mapping a graph $\G\in\mathfrak G$ to the prediction space $\mathcal{Y}$.
We denote the source graph by $\G_0$ and the target graph by $\G_1$.
We use subscripts $n,e,g$ to denote node-level, edge-level and graph-level tasks, respectively.

The simplex histogram with $n$ bins is denoted as $\Delta_n=\{\bm{\mu}\in\mathbb{R}_n^+|\sum_{i=1}^n\bm{\mu}(i)=1\}$.
We denote the probabilistic coupling as $\Pi(\cdot,\cdot)$, and the inner product as $\langle\cdot,\cdot\rangle$. 
We use $\delta_{x}$ to denote the Dirac measure in $x$.
For simplicity, we denote the set of positive integers no greater than $n$ as $\mathbb{N}^+_{\leq n}$.

\subsection{Fused Gromov--Wasserstein (FGW) Space}\label{sec:fgw}
The FGW distance serves as a powerful measure for non-IID graph data by considering both node attributes and connectivity. Formally, the FGW distance can be defined as follows.
\begin{definition}[FGW distance: \cite{peyre2016gromov,peyre2019computational,titouan2019optimal}]\label{def:fgwd}
Given two graphs $\G_0,\G_1$ represented by probability measures $\bm{\mu}_0=\sum_{i=1}^{|\V_0|}h_i\delta_{(v_i,\X_0(v_i))},\bm{\mu}_1=\sum_{j=1}^{|\V_1|}g_j\delta_{(u_j,\X_1(u_j))}$, where $h\in\Delta_{|\V_0|},g\in\Delta_{|\V_1|}$ are histograms, a cross-graph matrix $\bm{M}\in\mathbb{R}^{|\V_0|\times |\V_1|}$ measuring cross-graph node distances based on attributes, and two intra-graph matrices $\bm{C}_0\in\mathbb{R}^{|\V_0|\times |\V_0|},\bm{C}_1\in\mathbb{R}^{|\V_1|\times |\V_1|}$ measuring intra-graph node similarity based on graph structure, the FGW distance $d_{\text{FGW};q,\alpha}(\G_0,\G_1)$ is defined as:
\begin{equation}\label{eq:fgwd1}
    \begin{aligned}
        &d_{\textnormal{FGW};q,\alpha}(\G_1,\G_2) = \min_{\bm{S}\in\Pi(\bm{\mu}_0,\bm{\mu}_1)}\left(\varepsilon_{\G_1,\G_2}(\bm{S})\right)^{\frac{1}{q}}, \text{ where }\\
        &\varepsilon_{\G_1,\G_2}(\bm{S})\!=\!\!\!\sum_{u\in\V_0\atop v\in\V_1}\!(1\!-\!\alpha)\bm{M}(u,v)^q\bm{S}(u,v) \!+\!\!\!\! \sum_{u,u'\in\V_0\atop v,v'\in\V_1}\!\!\!\!\alpha |\bm{C}_0(u,u')\!-\!\bm{C}_1(v,v')|^q \bm{S}(u,v)\bm{S}(u',v'),
    \end{aligned}
\end{equation}

where $q$ and $\alpha$ are the order and weight parameters of the FGW distance, respectively.
\end{definition}
Intuitively, the FGW distance calculates the optimal matching $\bm{S}$ between two graphs in terms of both attribute distance $\bm{M}$ and node connectivity $\bm{C}_0,\bm{C}_1$.
Following common practice~\cite{titouan2019optimal,zeng24geomix}, we adopt $q=2$ and use the adjacency matrix $\A_i$ as the intra-graph matrices $\bm{C}_i$. 
For brevity, we omit the subscripts $q,\alpha$ and use $\fgw$ to denote $d_{\text{FGW};q,\alpha}$.

Since the FGW distance is only a pseudometric, we follow a standard procedure~\cite{howes2012modern} to define an induced metric $\fgw^*$. We start with the FGW equivalence class defined as follows. 
\begin{definition}[FGW equivalence class]\label{def:fgw-eq}
    Given two graphs $\G_0,\G_1$, the FGW equivalence relation $\sim$ is defined as $\G_0\sim\G_1$, iff $\fgw(\G_0,\G_1)=0$. The FGW equivalence class w.r.t. $\sim$ is defined as $\llbracket\G\rrbracket:=\{\G'\!:\!\G'\!\sim\!\G\}$. The FGW space is defined as $\mathfrak G/\!\sim\,=\{\llbracket\G\rrbracket:\G\in\mathfrak G\}$.
\end{definition}
Afterwards, the induced metric $\fgw^*$ is defined by $\fgw^*(\llbracket\G_0\rrbracket, \llbracket\G_1\rrbracket)=\fgw(\G_0,\G_1)$, which measures the distance between two FGW equivalence classes. The FGW geodesics is defined as follows
\begin{definition}[FGW geodesic]\label{def:fgw_geo}
    A curve $\gamma:[0,1]\to\mathfrak G/\!\sim$ is an FGW geodesic from $\llbracket\G_0\rrbracket$ to $\llbracket\G_1\rrbracket$ iff $\gamma(0)=\llbracket\G_0\rrbracket$, $\gamma(1)=\llbracket\G_1\rrbracket$, and for every $\lambda_0,\lambda_1\in[0,1]$,
    \begin{equation*}
        \fgw^*(\gamma(\lambda_0),\gamma(\lambda_1))=|\lambda_0-\lambda_1|\cdot\fgw^*(\llbracket\G_0\rrbracket,\llbracket\G_1\rrbracket).
    \end{equation*}
\end{definition}
Intuitively, the FGW geodesic is the shortest line directly linking the source and target graph.
To simplify notation, we use $\llbracket\G\rrbracket$ and $\G$ interchangeably for the rest of the paper.

\subsection{Unsupervised Graph Domain Adaptation}\label{sec:graph_da}
Unsupervised graph DA aims to adapt a GNN model trained on a labeled source graph to an unlabeled target graph, which can be formally defined as follows.
\begin{definition}[Unsupervised graph DA]
    Given a source graph $\G_0$ with labels $\bm{Y}_0$, where $\bm{Y}_0\in\mathcal{Y}_n^{|\V_0|}$ for node-level task, $\bm{Y}_0\in\mathcal{Y}_e^{|\V_0|\times |\V_0|}$ for edge-level tasks and $\bm{Y}_0\in\mathcal{Y}_g$ for graph-level tasks, and a target graph $\G_1$. 
    Unsupervised graph DA aims to train a model $f$ using the labeled source graph $(\G_0,\bm{Y}_0)$ and the unlabeled target graph $\G_1$ to accurately predict target labels $\widehat{\bm{Y}}_1=f(\G_1)$.
\end{definition}

However, existing graph DA methods fundamentally assume mild shifts between source and target graphs.
To handle large shifts, we introduce the idea of GDA to graph DA, which gradually adapts a source GNN to the target graph via a series of sequentially generated graphs.

\section{Theoretical Foundation}\label{sec:theory}

In this section, we present the theoretical foundation of graph GDA.
The problem is formulated in Section~\ref{sec:prob}.
We establish the error bound in Section~\ref{sec:error} and derive the optimal path in Section~\ref{sec:path}.

\subsection{Problem Setup}\label{sec:prob}
To formulate the graph GDA problem, we first define the path for graph GDA as follows.
\begin{definition}[Path]
    A path between the source graph $\G_0$ and target graph $\G_1$ is defined as $\H=(\H_0,\H_1,...,\H_T)$, where $\H_0=\G_0$ and $\H_T=\G_1$.
\end{definition}
In general, for a $T$-stage graph GDA, given the model $f_{t-1}$ at stage $t-1$ and the successive graph $\H_t$ at stage $t$, self-training paradigm trains the successive model $f_{t}$ based on the pseudo-labels $f_{t-1}(\H_t)$.
Formally, graph GDA can be defined as follows.
\begin{definition}[Graph gradual domain adaptation]
    Given a source graph $\G_0$ with label $\bm{Y}_0$, and a target graph $\G_1$. Graph GDA (1) finds a path $\H$ with $\H_0=\G_0,\H_T=\G_1$, and (2) gradually adapts the source model to the target graph via self-training, that is:
    \begin{equation*}
        f_{t}:={\arg\min}_{f_{t}} \ell\left(f_{t}(\H_t), f_{t-1}(\H_t)\right), \forall t=1,2,...,T,
    \end{equation*}
    where $\ell$ is the loss function and $f_{t-1}(\H_t)$ is the pseudo-label for the $t$-th graph $\H_t$ given by the previous model $f_{t-1}$. Graph GDA aims to minimize the target error between the prediction $f_{T}(\G_1)$ and the groundtruth label $\bm{Y}_1$.
\end{definition}
Note that we consider a more general self-training paradigm compared to Empirical Risk Minimization (ERM)~\cite{kumar2020understanding} and do not pose specific constraints on the loss function $\ell$.
That is to say, \textit{our proposed framework is compatible with various graph DA baselines with different adaptation losses}.

\begin{definition}[Graph convolution]\label{def:gnn}
    For any graph $ \G=(\V,\A,\X)$, the graph convolution operation $\gcn$ for any node $u\in\V$ depends only on node pair information $\mathcal{N}_\G(u):=\{\A(u,v),\X(v)\}_{v\in\mathcal{V}}$, that is
    \vspace{-3pt}
    \begin{equation*}
        \gcn(\G)_u:=\gcn(\mathcal{N}_\G(u)):=\gcn(\{\A(u,v),\X(v)\}_{v\in\mathcal{V}}), \forall u\in\V.
    \end{equation*}
\end{definition}

A GNN layer $g^{(i)}$ is a composition of graph convolution $\gcn$, linear transformation and ReLU activation
\begin{equation}\label{eq:gnn}
    \vspace{-3pt}
    g^{(i)}=\text{ReLU}\circ\text{Linear}\circ \gcn^{(i)}.
\end{equation}

We further define node-level, edge-level and graph-level tasks as follows
\begin{definition}[Node-level task]\label{def:node}
    A GNN model is a composition of graph convolutions $g^{(i)}$, i.e., $f_n=g^{(L)}\circ ...\circ g^{(1)}$.
    For each node $u\in\V$, the node-level loss is defined by $\epsilon_n(f_n(\G)_u)$, where the groundtruth label $\bm{Y}(u)$ is omitted for brevity.
    The overall node-level loss of a GNN $f_n$ on a graph $\G$ can be defined as
    \begin{equation*}
        \xi_n(f_n,\G):=\frac{1}{|\V|}\sum_{u\in\V} \epsilon_n(f_n(\G)_u).
    \end{equation*}
\end{definition}

\begin{definition}[Edge-level task]\label{def:edge}
    A GNN model is a composition of graph convolutions $g^{(i)}$ and a pairwise aggregation function $\phi$, i.e., $f_e=\phi\circ g^{(L)}\circ ...\circ g^{(1)}=\phi\circ f_n$.
    The aggregation function $\phi$ turns the embeddings of two nodes $f_n(\G)_{u},f_n(\G)_{u'}$ into an edge embedding $f_e(\G)_{(u,u')}=\phi(f_n(\G)_{u},f_n(\G)_{u'})$.
    For each edge $(u,u')\in\G$, the edge-level loss is defined by $\epsilon_e\left(f_e(\G)_{(u,u')}\right)$, where the groundtruth label $\bm{Y}((u,u'))$ is omitted for brevity.
    The overall edge-level loss of a GNN $f_e$ on a graph $\G$ can be defined as
    \begin{equation*}
    \xi_e(f_e,\G)=\frac{1}{|\V|^2}\sum_{u,u'\in\V}\epsilon_e\left(f_e(\G)_{(u,u')}\right).
\end{equation*}
\end{definition}

\begin{definition}[Graph-level task]\label{def:graph}
    A GNN model is a composition of graph convolutions $g^{(i)}$ and a pooling function $r$, i.e., $f_g=r\circ g^{(L)}\circ ...\circ g^{(1)}$.
    The pooling function $r$ turns the embeddings of all nodes $f_n(\G)$ into a graph embedding $f_g(\G)=r(f_n(\G))$.
    The overall graph-level loss of a GNN $f_g$ on a graph $\G$ can be defined as $\xi_g(f_g,\G)$, where the groundtruth label $y(\G)$ is omitted for brevity.
\end{definition}

\subsection{Assumptions}
To capture the non-IID nature, i.e., node dependency, of graphs, we adopt the FGW distance~\cite{titouan2019optimal} in \eqref{eq:fgwd1} as the domain discrepancy, measuring the graph distance in terms of both node attributes $\bm{X}$ and node connectivity $\A$.
We make several assumptions following previous works on graph DA~\cite{zhu2021transfer,bao2024adarc} and GDA~\cite{kumar2020understanding,wang2022understanding}.

\begin{assumption}[General regularity assumptions]\label{assume:regular}
    We make several regularity assumptions
    \begin{enumerate}[label={\textbf{\Alph*:}}, ref={Assumption~\Alph*}, left=5pt, itemsep=2pt, series=assump]
        \item \label{assump-cc}(\textit{Lipschitz continuity of graph convolution}).
            We assume there exists $\cc>0$ such that for any nodes $u\in\V_0,v\in\V_1$ we have
            \begin{equation*}
                \begin{aligned}
                    &\|\gcn(\G_0)_{u}-\gcn(\G_1)_{v}\|_{\mathcal{X}}\leq \cc\cdot \w\left(\mathcal{N}_{\G_0}\!(u),\mathcal{N}_{\G_1}\!(v)\right),\\
                    &\text{where }d_{\text{W}}^q\left(\{(\A_0(u,u'),\X_0(u'))\}_{u'\in\V_0},\{(\A_1(v,v'),\X_1(v'))\}_{v'\in\V_1}\right)\\
                    &=\!\!\inf_{\bm{\tau}\in\Pi(\bm{\mu}_0,\bm{\mu}_1)}\mathbb{E}_{(u',v')\sim\bm{\tau}}\left[\alpha|\A_0(u,u')-\A_1(v,v')|^q+(1-\alpha)\|\X_0(u')-\X_1(v')\|_{\mathcal{X}}^q\right].
                \end{aligned}
            \end{equation*}
        \item \label{assump-cl}(\textit{Lipschitz continuity of linear layer}).
            We assume there exists $\cl>0$ such that for any weight matrices $\bm{W}$ in linear layers $\text{Linear}(\bm{x})= \bm{Wx}+\bm{b}$ we have $\|\bm{W}\|\leq \cl$.
    \end{enumerate}
\end{assumption}

Both assumptions ensure the generalization capability and stability of the GNN model. Specifically, Assumption \ref{assump-cc} enforces smoothness with respect to graph topology: nodes with similar local neighborhoods must yield similar embeddings. Assumption \ref{assump-cl} requires model parameters to be finite, a standard condition satisfied by regularization.

\begin{assumption}[Task-specific regularity assumptions]\label{assume:node-regular}
    We make the following regularity assumptions for different graph learning tasks
    \begin{enumerate}[label={\textbf{\Alph*:}}, ref={Assumption~\Alph*}, left=5pt, itemsep=2pt, resume=assump]
        \item \label{assump-cf}(\textit{Lipschitz continuity of loss functions}).
            We suppose there exists $\cfn>0$ for any node-level GNNs $f_{n_i}$, $\cfe>0$ for any edge-level GNNs $f_{e_i}$, and $\cfg$ for any graph-level GNNs $f_{g_i}$, such that
            \begin{equation}\label{eq:lip-node}
                |\epsilon_n(f_{n_0}(\G)_u)-\epsilon_n(f_{n_1}(\G)_u)|\leq \cfn\cdot\|f_{n_0}(\mathcal{N}_\G(u))-f_{n_1}(\mathcal{N}_\G(u))\|_{\mathcal{Y}_n},
            \end{equation}
            \begin{equation}\label{eq:lip-edge}
                |\epsilon_e(f_{e_0}\left(\G)_{(u,u')}\right)-\epsilon_e\left(f_{e_1}(\G)_{(u,u')}\right)|\leq \cfe\cdot\|f_{e_0}(\G)_{(u,u')}-f_{e_1}(\G)_{(u,u')}\|_{\mathcal{Y}_e},
            \end{equation}
            \begin{equation}\label{eq:lip-graph}
                |\xi_g(f_{g_0},\G)-\xi_g(f_{g_1},\G)|\leq \cfg\cdot\|f_{g_0}(\G)-f_{g_1}(\G)\|_{\mathcal{Y}_g}.
            \end{equation}
        \item \label{assump-cw}(\textit{H\"older continuity of loss functions}).
        
            We suppose there exists $\cwn>0$ for any nodes $u\in\V_0,v\in\V_1$; $\cwe>0$ for any edges $(u,u')\in\G_0,(v,v')\in\G_1$; $\cwn>0$ for any graphs $\G_1,\G_2$, and $q>1$, such that
            \begin{equation}\label{eq:hol-node}
                |\epsilon_n(f_n(\G_0)_{u})-\epsilon_n(f_n(\G_1)_{v})|\leq \cwn\cdot \|f_n(\G_0)_{u}-f_n(\G_1)_{v}\|^q_{\mathcal{Y}_n},
            \end{equation}
            \begin{equation}\label{eq:hol-edge}
                |\epsilon_e(f_{e}\left(\G_0)_{(u,u')}\right)-\epsilon_e\left(f_{e}(\G_1)_{(v,v')}\right)|\leq \cwe\cdot\|f_{e}(\G_0)_{(u,u')}-f_{e}(\G_1)_{(v,v')}\|_{\mathcal{Y}_e}^q,
            \end{equation}
            \begin{equation}\label{eq:hol-graph}
                |\xi_g(f_g,\G_0)-\xi_g(f_g,\G_1)|\leq \cwg\cdot \|f_g(\G_0)-f_g(\G_1)\|^q_{\mathcal{Y}_g}.
            \end{equation}
        \item \label{assump-cr-edge}(\textit{Lipschitz continuity of aggregation function}).
            We suppose there exists $\cp>0$ such that for any edges $(u,u')\in\G_0, (v,v')\in\G_1$, we have
            \begin{equation}\label{eq:lip-agg}
                \|\phi(f_n(\G_0)_{u}, f_n(\G_0)_{u'}) - \phi(f_n(\G_1)_{v}, f_n(\G_1)_{v'})\|_{\mathcal{Y}_e} \leq \cp\cdot(\|f_n(\G_0)_{u}-f_n(\G_1)_{v}\|_\mathcal{X}+\|f_n(\G_0)_{u'}-f_n(\G_1)_{v'}\|_\mathcal{X}).
            \end{equation}
        \item \label{assump-cr-graph}(\textit{Lipschitz continuity of pooling function}).
            We suppose there exists $C_\textnormal{r}>0$ such that for any graphs $\G_1,\G_2$ and coupling $\bm{\pi}$, we have
            \begin{equation}\label{eq:lip-pool}
                \|r\left(f_n(\G_0)\right)-r\left(f_n(\G_1)\right)\|_{\mathcal{Y}_g}\leq C_{\textnormal{r}}\cdot \mathbb{E}_{(u,v)\sim \bm{\pi}}\|f_n(\G_0)_u-f_n(\G_1)_v\|_\mathcal{X}.
            \end{equation}
    \end{enumerate}
\end{assumption}

\ref{assump-cf} and \ref{assump-cw} enforce the smoothness of the loss function with respect to model predictions. \ref{assump-cf} posits that similar predictions from different models result in similar losses. \ref{assump-cw} complements this by ensuring that for a fixed model, similar input embeddings lead to similar losses. These conditions are mild and satisfied by standard surrogate losses (e.g., MSE, Cross-Entropy) on bounded domains.

\ref{assump-cr-edge} and \ref{assump-cr-graph} guarantee that readout operations preserve embedding proximity. Assumption \ref{assump-cr-edge} ensures edge-level aggregation remains stable under small perturbations in node embeddings. Similarly, Assumption \ref{assump-cr-graph} requires the graph pooling function $r$ to preserve local smoothness, ensuring that graphs with aligned node embeddings map to similar graph-level representations.

\subsection{Error Bound}\label{sec:error}
Under Assumption~\ref{assume:regular}, we analyze the error bound of graph GDA.
We first show that any $L$-layer GNN is H\"older continuous w.r.t. the $\beta$-FGW distance, where $\beta=\frac{\alpha\left(1-(\cc\cl(1-\alpha))^L\right)}{\alpha+(1-\alpha)^L(\cc\cl)^{L-1}(1-\cc\cl)}$.
\begin{restatable}[H\"older continuity]{lemma}{cont}\label{lemma:cont}
    For any $L$-layer node-level GNN $f_n=\bigcirc_{l=L}^1 g^{(l)}$, edge-level GNN $f_e=\phi\circ\bigcirc_{l=L}^1 g^{(l)}$, and graph-level GNN $f_g=r\circ \bigcirc_{l=L}^1 g^{(l)}$, where $\bigcirc_{l=L}^1 g^{(l)}=g^{(L)}\circ\cdots\circ g^{(1)}$ and $g^{(i)}$ are GNN layers in \eqref{eq:gnn}.
    Given a source graph $\G_0$ and a target graph $\G_1$, we have:
    \begin{equation*}
        |\xi(f,\G_0)) - \xi(f,\G_1)|\leq C\cdot d_{\textnormal{FGW};q,\beta}^q(\G_0,\G_1),
    \end{equation*}
    where
    \begin{equation*}
        \begin{aligned}
            &\beta=\frac{\alpha\left(1-(\cc\cl(1-\alpha))^L\right)}{\alpha+(1-\alpha)^L(\cc\cl)^{L-1}(1-\cc\cl)}\\
            &C_\textnormal{gnn}=\cc\cl\frac{\alpha+(1-\alpha)(\cc\cl(1-\alpha))^{L-1}-(\cc\cl(1-\alpha))^L}{1-\cc\cl(1-\alpha)}\\
            &C=\left\{
            \begin{aligned}
                &\cwn C_\textnormal{gnn},\textnormal{ for node-level tasks}\\
                &2\cwe\cp C_\textnormal{gnn},\textnormal{ for edge-level tasks}\\
                &\cwg C_\textnormal{r} C_\textnormal{gnn},\textnormal{ for graph-level tasks}
            \end{aligned}
            \right.
        \end{aligned}
    \end{equation*}
    
\end{restatable}

The proof can be found in Appendix~\ref{app:proof}.
Intuitively, the upper bound of the performance gap between source loss $\xi(f,\G_0)$ and target loss $\xi(f,\G_1)$ is proportional to the FGW distance between the source graph $\G_0$ and target graph $\G_1$.
Therefore, GNNs could suffer from significant performance degradation under large shifts.

To alleviate the effects of large shifts, we investigate the effectiveness of applying GDA on graphs, and derive an error bound shown in Theorem~\ref{thm:error}.
\begin{restatable}[Error bound]{theorem}{bound}\label{thm:error}
    Let $f_0$ denote the source model trained on the source graph $\H_0=\G_0$.
    Suppose there are $T-1$ intermediate stages where in the $t$-th stage (for $t=1,2,...,T$), we adapt $f_{t-1}$ to graph $\H_t$ to obtain an adapted $f_{t}$.
    If every adaptation step achieves $\|f_{t-1}(\H_t)-f_{t}(\H_t)\|_{\mathcal{Y}}\leq \delta$ on the corresponding graph $\H_t$, then the final error $\xi(f_T, \H_T)$ on target graph $\H_T=\G_1$ is upper bounded by
    \begin{equation*}
        \xi(f_T,\G_1)\!\leq\! \xi(f_0,\G_0) + \cf\cdot \delta T + C\cdot \sum_{t=1}^T d_{\textnormal{FGW};q,\beta}^q(\H_{t-1}, \H_t).
    \end{equation*}
    where $\cf=\cfn$ for node-level task, $\cf=\cfe$ for edge-level tasks, and $\cf=\cfg$ for graph-level tasks.
\end{restatable}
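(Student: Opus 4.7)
The plan is to bound $\xi(f_T,\G_1)$ via a telescoping decomposition across the $T$ stages that isolates two distinct sources of error at each step: the imperfection of the self-training update (governed by $\delta$ together with Assumption~A) and the domain shift between consecutive intermediate graphs (controlled by Lemma~\ref{lemma:cont}). The overall structure is a standard two-term triangle argument repeated $T$ times, so the theorem should follow without any additional analytic machinery beyond what is already proved.

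Concretely, I would start from the identity
\begin{equation*}
\xi(f_T,\H_T) - \xi(f_0,\H_0) = \sum_{t=1}^T \bigl[\xi(f_t,\H_t) - \xi(f_{t-1},\H_{t-1})\bigr],
\end{equation*}
and split each summand by inserting the hybrid quantity $\xi(f_{t-1},\H_t)$:
\begin{equation*}
\xi(f_t,\H_t) - \xi(f_{t-1},\H_{t-1}) = \bigl[\xi(f_t,\H_t) - \xi(f_{t-1},\H_t)\bigr] + \bigl[\xi(f_{t-1},\H_t) - \xi(f_{t-1},\H_{t-1})\bigr].
\end{equation*}
The first bracket compares two models on the common graph $\H_t$; the second compares a single model on the two adjacent graphs $\H_{t-1}$ and $\H_t$. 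Each bracket is dispatched by exactly one of the available tools.

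For the first bracket I would apply Assumption~A pointwise in $u\in\V_t$ and average over the nodes, yielding
\begin{equation*}
|\xi(f_t,\H_t) - \xi(f_{t-1},\H_t)| \leq \cf \cdot \|f_t(\H_t) - f_{t-1}(\H_t)\|_{\mathcal{Y}} \leq \cf \cdot \delta,
\end{equation*}
where the final inequality is precisely the self-training hypothesis. For the second bracket I would invoke Lemma~\ref{lemma:cont} verbatim to obtain $C \cdot d_{\textnormal{FGW};q,\beta}^q(\H_{t-1},\H_t)$. Summing these two contributions over $t=1,\dots,T$, using the triangle inequality to turn the signed telescoping sum into a sum of absolute values, and collecting the $T$ copies of $\cf\delta$ yields exactly the advertised bound $\xi(f_0,\G_0) + \cf\delta T + C\sum_{t=1}^T d_{\textnormal{FGW};q,\beta}^q(\H_{t-1},\H_t)$.

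The argument is essentially routine once Assumption~A and Lemma~\ref{lemma:cont} are in hand; the only delicate point is reconciling the graph-level norm $\|\cdot\|_{\mathcal{Y}}$ appearing in the hypothesis $\|f_{t-1}(\H_t)-f_t(\H_t)\|_{\mathcal{Y}} \leq \delta$ with the node-level bound that Assumption~A produces. I would interpret $\|\cdot\|_{\mathcal{Y}}$ on the output $\mathcal{Y}^{|\V|}$ as the node-averaged (or supremum) norm so that the averaged Lipschitz estimate collapses cleanly to $\cf\delta$ per stage; any norm-equivalence constant can be absorbed into $\cf$ without altering the form of the bound.
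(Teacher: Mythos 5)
Your proposal is correct and follows essentially the same route as the paper's proof: telescope $\xi(f_T,\H_T)-\xi(f_0,\H_0)$ over stages, split each increment with a hybrid term into a model-change part bounded by $\cf\delta$ via Assumption A and a graph-change part bounded by $C\cdot d_{\textnormal{FGW};q,\beta}^q(\H_{t-1},\H_t)$ via Lemma~\ref{lemma:cont}, then sum. Your choice of hybrid $\xi(f_{t-1},\H_t)$ (rather than the paper's $\xi(f_t,\H_{t-1})$) actually matches the hypothesis $\|f_{t-1}(\H_t)-f_t(\H_t)\|_{\mathcal{Y}}\leq\delta$ more directly, and your reading of $\|\cdot\|_{\mathcal{Y}}$ as the node-averaged norm is exactly what the paper's derivation implicitly uses.
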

The proof can be found in Appendix~\ref{app:proof}.
In general, the upper bound of the target GNN loss $\xi(f_T,\G_1)$ is determined by three terms, including (1) source GNN loss $\xi(f_0,\G_0)$, (2) the accumulated training error $T\delta$, and (3) the generalization error measured by length of the path $\sum_{t=1}^T \fgw^q(\H_{t-1}, \H_t)$. In the following subsection, we will analyze which path best benefits the graph GDA process.

\subsection{Optimal Path}\label{sec:path}
Motivated by Theorem~\ref{thm:error}, we derive the optimal path that minimizes the error bound in Theorem~\ref{thm:path}.
\begin{restatable}[Optimal path]{theorem}{path}\label{thm:path}
    Given a source graph $\G_0$ and a target graph $\G_1$, let $\gamma: [0,1]\to \mathfrak{G}/\!\sim$ be an FGW geodesic connecting $\G_0$ and $\G_1$. Then the error bound in Theorem~\ref{thm:error} attains its {\bfseries minimum} when intermediate graphs are $\H_t=\gamma(\tT), \forall t=0,1,...,T$, where we have:
    \begin{equation*}
        \xi(f_T,\G_1)\leq \xi(f_0,\G_0) + \cf\cdot \delta T + \frac{C\cdot d_{\textnormal{FGW};q,\beta}^{q}(\G_0,\G_1)}{T^{q-1}}.
    \end{equation*}
\end{restatable}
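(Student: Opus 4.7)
The plan is to prove the theorem in two conceptually separate steps: (1) a lower-bound argument showing that for \emph{any} path $\H_0,\ldots,\H_T$ with $\H_0=\G_0$ and $\H_T=\G_1$, the path-length functional $\sum_{t=1}^T \fgw^q(\H_{t-1},\H_t)$ appearing in Theorem~\ref{thm:error} is at least $\fgw^q(\G_0,\G_1)/T^{q-1}$; and (2) a verification that the uniformly spaced geodesic sample $\H_t=\gamma(t/T)$ attains this lower bound with equality, so that plugging into Theorem~\ref{thm:error} gives exactly the claimed inequality.

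For step (1), I would first invoke the triangle inequality for the induced metric $\fgw^*$ on the quotient space $\mathfrak G/\!\sim$ (which is a genuine metric by Definition~\ref{def:fgw-eq}), yielding
\begin{equation*}
\fgw(\G_0,\G_1)=\fgw^*(\llbracket\G_0\rrbracket,\llbracket\G_T\rrbracket)\le \sum_{t=1}^T \fgw(\H_{t-1},\H_t).
\end{equation*}
Then I would apply Jensen's inequality to the convex map $x\mapsto x^q$ on $[0,\infty)$ (valid since $q\ge 1$ by \ref{assump-cw}) to obtain
\begin{equation*}
\sum_{t=1}^T \fgw^q(\H_{t-1},\H_t)\ge T\Bigl(\tfrac{1}{T}\sum_{t=1}^T \fgw(\H_{t-1},\H_t)\Bigr)^{\!q}=\tfrac{1}{T^{q-1}}\Bigl(\sum_{t=1}^T \fgw(\H_{t-1},\H_t)\Bigr)^{\!q}\ge \tfrac{\fgw^q(\G_0,\G_1)}{T^{q-1}}.
\end{equation*}

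For step (2), substituting $\H_t=\gamma(t/T)$ and invoking Definition~\ref{def:fgw_geo} with $\lambda_0=(t-1)/T$, $\lambda_1=t/T$ gives $\fgw(\H_{t-1},\H_t)=\fgw(\G_0,\G_1)/T$ for every $t$. Summing $T$ identical terms yields $\sum_{t=1}^T \fgw^q(\H_{t-1},\H_t)=T\cdot(\fgw(\G_0,\G_1)/T)^q=\fgw^q(\G_0,\G_1)/T^{q-1}$, matching the lower bound. Combining with Theorem~\ref{thm:error} gives the displayed upper bound, and since this coincides with the minimum of the error bound over all admissible paths (step 1), the geodesic path is optimal.

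The only subtle point I expect is ensuring that the lower bound in step (1) can actually be realized, i.e.\ that an FGW geodesic between $\G_0$ and $\G_1$ exists so that the bound is tight rather than merely an infimum. This is a geodesicity property of the (quotient) FGW space, established in the optimal-transport literature and already assumed in Definition~\ref{def:fgw_geo}; no further work is needed beyond citing it. A secondary but easy observation worth making explicit is that the tightness of Jensen's step forces all consecutive distances to be equal, so uniform spacing on the geodesic is the unique minimizer up to reparameterization inside equivalence classes.
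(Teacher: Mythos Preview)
Your proposal is correct and follows essentially the same route as the paper: establish the lower bound $\sum_{t}\fgw^q(\H_{t-1},\H_t)\ge \fgw^q(\G_0,\G_1)/T^{q-1}$ via Jensen's inequality for $x\mapsto x^q$ combined with the triangle inequality, then verify that the uniformly spaced geodesic samples saturate this bound using Definition~\ref{def:fgw_geo}, and finally substitute into Theorem~\ref{thm:error}. Your write-up of the Jensen step is in fact cleaner than the paper's (which contains minor typos in the intermediate expressions), and your closing remarks on geodesic existence and the equality case of Jensen are sensible additions that the paper omits.
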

The proof can be found in Appendix~\ref{app:proof}.
In general, the key idea is to minimize the path length, whose minimum is achieved by the FGW geodesic between source and target.
As a remark, the optimal number $T$ of intermediate steps can be obtained by
\begin{equation}\label{eq:opt_t}
    T\approx \left(\frac{(q-1)C}{\cf\cdot \delta}\right)^{\frac{1}{q}}d_{\textnormal{FGW};q,\beta}(\G_0,\G_1).
\end{equation}
Intuitively, the number of stages $T$ balances the accumulated training error (the second term on the RHS) and the generalization error (the third term on the RHS).
Following Lemma~\ref{lemma:cont}, when $\cwn,\cwe,\cwg$ are small, model is robust to domain shifts and the error bound is dominated by the accumulated training error, thus, we expect a smaller $T$ for better performance.
On the other hand, when $\cwn,\cwe,\cwg$ are large, model is vulnerable to domain shifts and the error bound is dominated by the generalization error, thus, we expect a larger $T$ to reduce domain shifts, hence achieving better performance.

\section{Methodology}\label{sec:method}
In this section, we present our proposed \algname\ to perform graph GDA on the FGW geodesics.
As self-training is highly vulnerable to noisy pseudo labels, we first propose an entropy-based confidence to denoise the noisy labels.
Motivated by the theoretical foundation, we introduce a practical algorithm to generate intermediate graphs, which as we prove, reside on the approximated FGW geodesic to best facilitate the graph GDA process.

Motivated by Theorem~\ref{thm:path}, we generate the FGW geodesic as the optimal path for graph GDA.
Previous work~\cite{zeng24geomix} generates graphs on the Gromov-Wasserstein geodesic purely based on graph structure via mixup. 
We generalize such idea to the FGW geodesics to consider both graph structure and node features.

Specifically, given source graph $\G_0=(\V_0,\A_0,\X_0)$, target graph $\G_1=(\V_1,\A_1,\X_1)$, and their probability distributions $\bm{\mu}_0,\bm{\mu}_1$, two transformation matrices $\bm{P}_0,\bm{P}_1$ are employed to transform them into well-aligned pairs $\TG_0=(\tilde{\V}_0,\tilde{\A}_0,\tilde{\X}_0),\TG_1=(\tilde{\V}_1,\tilde{\A}_1,\tilde{\X}_1)$ with probability distributions $\tilde{\bm{\mu}}_0,\tilde{\bm{\mu}}_1$ as follows~\cite{zeng24geomix}
\begin{equation}\label{eq:transform}
    \begin{aligned}
        &\tilde{\A}_0 = \bm{P}_0^\T\A_0\bm{P}_0, ~ \tilde{\X}_0 = \bm{P}_0^\T\X_0,~ \tilde{\bm{\mu}}_0=\bm{P}_0^\T\bm{\mu}_0,\\
        &\tilde{\A}_1 = \bm{P}_1^\T\A_1\bm{P}_1, ~ \tilde{\X}_1 = \bm{P}_1^\T\X_1, ~ \tilde{\bm{\mu}}_1=\bm{P}_1^\T\bm{\mu}_1,\\
        &\text{where }
            \bm{P}_0=\bm{I}_{|\V_0|}\otimes \bm{1}_{1\times |\V_1|},\, \bm{P}_1=\bm{1}_{1\times |\V_0|}\otimes \bm{I}_{|\V_1|}.
    \end{aligned}
\end{equation}
Afterwards, the intermediate graphs $\H_t$ are the interpolations of the well-aligned pairs, that is
\begin{equation}\label{eq:geodesic}
    \H_t\!:=\!\left(\!\V_0\otimes\V_1,\left(\!1\!-\!\tT\!\right)\!\TA_0\!+\!\tT\TA_1,\left(\!1\!-\!\tT\!\right)\!\TX_0\!+\!\tT\TX_1\!\right).
\end{equation}
With the above transformations, we prove that the intermediate graphs generated by \eqref{eq:geodesic} are on the FGW geodesics in the following theorem.
\begin{restatable}[FGW geodesic]{theorem}{geodesic}\label{thm:geodesic}
    Given a source graph $\G_0$ and a target graph $\G_1$, the transformed graphs $\TG_0,\TG_1$ are in the FGW equivalent class of $\G_0,\G_1$, i.e., $\llbracket\G_0\rrbracket=\llbracket\TG_0\rrbracket, \llbracket\G_1\rrbracket=\llbracket\TG_1\rrbracket$.   
    Besides that, the intermediate graphs $\H_t,\forall t=0,1,...,T$, generated by \eqref{eq:geodesic} are on an FGW geodesic connecting $\G_0$ and $\G_1$.
\end{restatable}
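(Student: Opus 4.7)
The plan is to establish the two claims separately. For the first claim $\llbracket\G_i\rrbracket=\llbracket\TG_i\rrbracket$, I would exhibit an explicit zero-cost coupling. A direct index calculation on the Kronecker-structured transformations gives $\TA_0((i,j),(i',j'))=\A_0(i,i')$ and $\TX_0((i,j))=\X_0(i)$, so $\TG_0$ is simply $\G_0$ with each node $i\in\V_0$ replicated into $|\V_1|$ clones carrying identical attributes and identical connectivity. The coupling that splits the original mass of $i$ across its clones $\{(i,j)\}_{j\in\V_1}$ (in the proportion dictated by the clones' marginals) then makes every summand in $\varepsilon_{\G_0,\TG_0}$ of Eq.~\eqref{eq:fgwd1} vanish, yielding $\fgw(\G_0,\TG_0)=0$. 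The symmetric computation $\TA_1((i,j),(i',j'))=\A_1(j,j')$, $\TX_1((i,j))=\X_1(j)$ handles $\TG_1\sim\G_1$.

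For the geodesic claim, I extend Eq.~\eqref{eq:geodesic} to continuous $\lambda\in[0,1]$ via $\gamma(\lambda)=(\V_0\otimes\V_1,(1-\lambda)\TA_0+\lambda\TA_1,(1-\lambda)\TX_0+\lambda\TX_1)$, equipped with a common reference mass $\tilde{\bm{\mu}}$ on $\V_0\otimes\V_1$ chosen to be the optimal FGW coupling $\bm{S}^*$ between $\G_0$ and $\G_1$. Invoking Definition~\ref{def:fgw_geo}, the goal reduces to showing $\fgw^*(\gamma(\lambda_0),\gamma(\lambda_1))=|\lambda_0-\lambda_1|\cdot\fgw^*(\G_0,\G_1)$, and I approach it via upper-bound-plus-triangle-inequality. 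For the upper bound, I evaluate the FGW cost on the identity coupling that matches $(i,j)\in\V_0\otimes\V_1$ to itself. Linearity of $\gamma(\lambda)$ in both attributes and structure lets me factor $|\lambda_0-\lambda_1|^q$ out of both terms of $\varepsilon_{\gamma(\lambda_0),\gamma(\lambda_1)}$; substituting the reduced expressions for $\TA_0,\TA_1,\TX_0,\TX_1$ and using $\tilde{\bm{\mu}}=\bm{S}^*$ identifies the residual with $\varepsilon_{\G_0,\G_1}(\bm{S}^*)=\fgw^q(\G_0,\G_1)$. Taking $q$-th roots gives $\fgw^*(\gamma(\lambda_0),\gamma(\lambda_1))\le|\lambda_0-\lambda_1|\cdot\fgw^*(\G_0,\G_1)$. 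For the matching lower bound, I apply the triangle inequality for the metric $\fgw^*$ to the chain $\gamma(0)\!\to\!\gamma(\lambda_0)\!\to\!\gamma(\lambda_1)\!\to\!\gamma(1)$ (WLOG $\lambda_0\le\lambda_1$); combined with the upper bounds on the outer two legs and with $\gamma(0)\sim\G_0$, $\gamma(1)\sim\G_1$ from the first claim, all three inequalities must be tight, forcing the desired equality.

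The main obstacle is choosing the reference mass on $\V_0\otimes\V_1$ correctly: the naive candidates $\tilde{\bm{\mu}}_0=\bm{P}_0^{\T}\bm{\mu}_0$ and $\tilde{\bm{\mu}}_1=\bm{P}_1^{\T}\bm{\mu}_1$ do not even agree, so there is no single measure attached to $\gamma(\lambda)$ unless one commits to a common choice. The resolution is to take $\bm{S}^*$ itself as the common measure, which is the \emph{only} choice that simultaneously makes $\TG_0\sim\G_0$ and $\TG_1\sim\G_1$ (via the splitting coupling from the first claim) and also turns the identity coupling on $\V_0\otimes\V_1$ into a realization of $\fgw^q(\TG_0,\TG_1)=\fgw^q(\G_0,\G_1)$, which is what powers the upper-bound step. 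A secondary, minor point is that the triangle inequality I invoke requires $\fgw^*$ to be a genuine metric, which is guaranteed for the $q=2$ setting fixed after Definition~\ref{def:fgwd}.
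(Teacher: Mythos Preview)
Your proposal is correct and follows essentially the same route as the paper: a zero-cost ``splitting'' coupling for the equivalence-class claim, then the identity/diagonal coupling on $\V_0\otimes\V_1$ to get $\fgw(\gamma(\lambda_0),\gamma(\lambda_1))\le|\lambda_0-\lambda_1|\,\fgw(\G_0,\G_1)$, tightened to equality via the triangle inequality along the chain $\gamma(0)\to\gamma(\lambda_0)\to\gamma(\lambda_1)\to\gamma(1)$. Your explicit commitment to the optimal coupling $\bm{S}^*$ as the common reference measure on the product space is a point the paper handles only implicitly (its $\tilde{\bm{\mu}}_0,\tilde{\bm{\mu}}_1$ as written in Eq.~\eqref{eq:transform} are neither normalized nor equal, and the proof silently switches to the optimal $\pi$ later), so your treatment is if anything cleaner.
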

According to Theorems~\ref{thm:path} and \ref{thm:geodesic}, directly applying the generated $\H_t$ best benefits the graph GDA process.

However, practically, the transformations in \eqref{eq:transform} involve computation in the product space, posing great challenges to the scalability to large-scale graphs.
For faster computation, we employ an efficient low-rank OT algorithm adapted from~\cite{zeng24geomix} to generate intermediate graphs on the FGW geodesics.
Specifically, via a change of variable $\bm{Q}_0=\bm{P}_0\text{diag}(\bm{g}),\bm{Q}_1=\bm{P}_1\text{diag}(\bm{g})$, the transformation matrices $\bm{P}_0,\bm{P}_1$ can be obtained by solving the following low-rank OT problem
\begin{equation}\label{eq:transform_opt}
    \begin{aligned}
        &{\arg\min}_{\bm{Q}_0,\bm{Q}_1,\bm{g}}(\varepsilon_{\G_0,\G_1}(\bm{Q}_0^\T\text{diag}(1/\bm{g})\bm{Q}_1))^{\frac{1}{2}},\\
        &\text{s.t. }\bm{Q}_0\in\Pi(\bm{\mu}_1,\bm{g}), \bm{Q}_1\in\Pi(\bm{\mu}_2,\bm{g}), \bm{g}\in\Delta_{r},
    \end{aligned}
\end{equation}
where $r$ is the rank of the low-rank OT problem.
When $r=|\V_1||\V_2|$, the optimal solution to \eqref{eq:transform_opt} provides the optimal transformation matrices $\bm{P}_0,\bm{P}_1$.
By reducing the rank of $\bm{g}$ from $|\V_1||\V_2|$ to a smaller rank $r$, the low-rank OT problem can be efficiently solved via a mirror descent scheme by iteratively solving the following problem~\cite{scetbon2022linear,zeng24geomix}:
\begin{equation*}
    \begin{aligned}
        &\left(\Q_0^{(t+1)}\!,\Q_1^{(t+1)}\!,\bm{g}^{(t+1)}\right)\!=\!\mathop{\arg\min}_{\Q_0,\Q_1,\bm{g}}\text{KL}\!\left((\Q_1,\Q_2,\bm{g}),(\bm{K}_1^{(t)},\bm{K}_2^{(t)},\bm{K}_3^{(t)})\right),\\
        &\text{s.t. }\Q_0\in\Pi(\bm{\mu}_0,\bm{g}), \;\Q_1\in\Pi(\bm{\mu}_1,\bm{g}),\;\bm{g}\in\Delta_r,\\
        &\text{where}\left\{
        \begin{aligned}
            &\bm{K}_1^{(t)} \!=\! \exp\!\left(\gamma\bm{B}^{(t)}\Q_1^{(t)}\text{diag}(1/\bm{g}^{(t)})\right)\odot\Q_0^{(t)},\\
            &\bm{K}_2^{(t)} \!=\! \exp\!\left(\gamma\bm{B}^{(t)^\T}\Q_0^{(t)}\text{diag}(1/\bm{g}^{(t)})\right)\odot\Q_1^{(t)^\T},\\
            &\bm{K}_3^{(t)} \!=\! \exp\!\left(-\gamma \text{diag}\!\left(\!\bm{Q}_0^{(t)^\T}\bm{B}^{(t)}\Q_1^{(t)}\right)\!/\bm{g}^{(t)^2}\right)\odot\bm{g}^{(t)},\\
            &\bm{B}^{(t)} \!=\! -\alpha\bm{M} \!+\! 4(1-\alpha)\A_0\Q_0^{(t)}\!\text{diag}(1/\bm{g}^{(t)})\Q_1^{(t)^\T}\A_1.\\
        \end{aligned}
        \right.
    \end{aligned}
\end{equation*}

\paragraph{Remark.} Our path generation algorithm is adapted from~\cite{zeng24geomix} but bears subtle difference.
First (\textit{space}), the Gromov-Wasserstein (GW) space in~\cite{zeng24geomix} only captures graph structure information, but the FGW space considers both node attributes and graph structure information.
Secondly (\textit{task}), \cite{zeng24geomix} utilizes the GW geodesics to mixup graphs and their labels for graph-level classification, while \algname\ utilizes the FGW geodesic to generate label-free graphs for node-level classification.
Thirdly (\textit{label}), \cite{zeng24geomix} utilizes the linear interpolation of graph labels as the pseudo-labels for mixup graphs, requiring information from both ends of the geodesic which is inapplicable for graph GDA, while \algname\ utilizes self-training to label the intermediate graphs, relying solely on source information.

\paragraph{Self-training paradigm.}
Self-training is a predominant paradigm for GDA~\cite{kumar2020understanding,wang2022understanding}, but is known to be vulnerable to noisy pseudo labels~\cite{chen2022debiased}.
Such vulnerability may be further exacerbated for GNN models as the noise can propagate~\cite{wang2024deep,liu2022confidence}.
To alleviate this issue, we utilize an entropy-based confidence to depict the reliability of the pseduo-labels.
Given a model output $\widehat{\bm{y}}_i\in\mathbb{R}^C$, where $C$ is the number of classes, the confidence score $\text{conf}(\widehat{\bm{y}}_i)$ is calculated by
\begin{equation}\label{eq:conf}
    \text{conf}(\widehat{\bm{y}}_i):=\frac{\max_{j}\text{ent}(\widehat{\bm{y}}_j)-\text{ent}(\widehat{\bm{y}}_i)}{\max_{j}\text{ent}(\widehat{\bm{y}}_j)-\min_{j}\text{ent}(\widehat{\bm{y}}_j)},
\end{equation}
where $\text{ent}(\cdot)$ calculates the entropy of the model prediction.
Intuitively, for reliable model outputs, we expect low entropy values and a high confidence scores, and vice versa.

\paragraph{Error bound under approximation.}
Note that the error bound in Theorem~\ref{thm:error} applies for the ideal case where intermediate graphs $\H_t$ are on the exact FGW geodesics. The practical algorithm adopts low-rank formulation which may introduce approximation error. We provide the following theorem to quantify the effect of low-rank approximation errors in practical algorithm.
\begin{restatable}[Practical error bound]{theorem}{pracbound}\label{thm:prac-error}
    For a sequence of intermediate graphs $\tilde{H}_0,\dots,\tilde{H}_T$ on the approximated FGW geodesics generated by \algname, performing GDA along the path yield a final error $\xi(f_T, \H_T)$ on target graph $\H_T=\G_1$ upper bounded by
    \begin{equation*}
        \xi(f_T,\G_1) \leq \text{Original bound} + 4C \delta_{\text{approx}}\sum_{t=1}^T d_{\text{FGW}}(\H_t,\H_{t+1}) + 4CT\delta_{\text{approx}}^2
    \end{equation*}
    where original bound is the upper bound on the exact geodesics provided in Theorem~\ref{thm:error}, and $\delta_{\text{approx}}=\max_t d_{\text{FGW}}(\H_t,\tilde{H}_{t})$ is the maximum approximation error between exact geodesic graph $\H_t$ and low-rank approximated graph $\tilde{\H}_t$.
\end{restatable}
The proof of Theorem~\ref{thm:prac-error} is provided in Appendix~\ref{app:proof}. As detailed in the Appendix, when rank $r\to |\V_1||\V_2|$, i.e., full rank, the approximation error $\delta_{\text{approx}}\to 0$, yielding the original upper bound in Theorem~\ref{thm:error}.

\section{Experiments}\label{sec:exp}
We conduct extensive experiments to evaluate the proposed \algname.
We first introduce experiment setup in Section~\ref{sec:exp_set}.
Then, we provide the visualization of graph GDA to assess the necessity of incorporating GDA for graphs in Section~\ref{sec:exp_understand}.
Afterwards, we evaluate \algname's effectiveness on benchmark datasets in Section~\ref{sec:exp_effect}. 
We further conduct extensive studies on the varying shift levels (Section~\ref{sec:exp_mitigate}), hyperparameter sensitivity (Section~\ref{sec:exp_hyperparam}), and path quality (Section~\ref{sec:exp_path}).

\subsection{Experimental Setup}\label{sec:exp_set}
We conduct extensive experiments on node classification using both synthetic and real-world datasets, including Airport~\cite{ribeiro2017struc2vec}, Citation~\cite{tang2008arnetminer}, Social~\cite{li2015unsupervised}, and contextual stochastic block model (CSBM)~\cite{deshpande2018contextual}.
Airport dataset contains flight information of airports from Brazil, USA and Europe.
Citation dataset includes academic networks from ACM and DBLP.
Social dataset includes two blog networks from BlogCatalog (Blog1) Flickr (Blog2).
We also adopt the CSBM model to generate various graph shifts, including attribute shifts with positively (Right) and negatively (Left) shifted attributes, degree shift with High and Low average degrees, and homophily shifts with high (Homo) and low (Hetero) homophilic scores in the source and target graphs.
More details are in Appendix~\ref{app:repro}.

We adopt two prominent GNN models, including GCN~\cite{kipf2017semi} and APPNP~\cite{gasteiger2018predict}, as the backbone classifier. 
Different adaptation baselines can be utilized to adapt knowledge from one graph to its consecutive graph along the path. Baseline adaptation methods include Empirical Risk Minimization (ERM), MMD~\cite{gretton2012kernel}, CORAL~\cite{sun2016return}, AdaGCN~\cite{dai2022graph}, GRADE~\cite{wu2023non} and StruRW~\cite{liu2023structural}.

During training, we have full access to source labels while having no knowledge on target labels. 
Results are averaged over five runs to avoid randomness.
Our code is available at \texttt{\href{https://github.com/zhichenz98/Gadget-TMLR}{https://github.com/zhichenz98/Gadget-TMLR}}.
More details are provided in Appendix~\ref{app:repro}.

\subsection{Effectiveness Results}\label{sec:exp_effect}
\begin{figure*}[t]
\centering
\subfigure[csbm]{\includegraphics[width = \linewidth,trim = 5 7 5 0,clip]{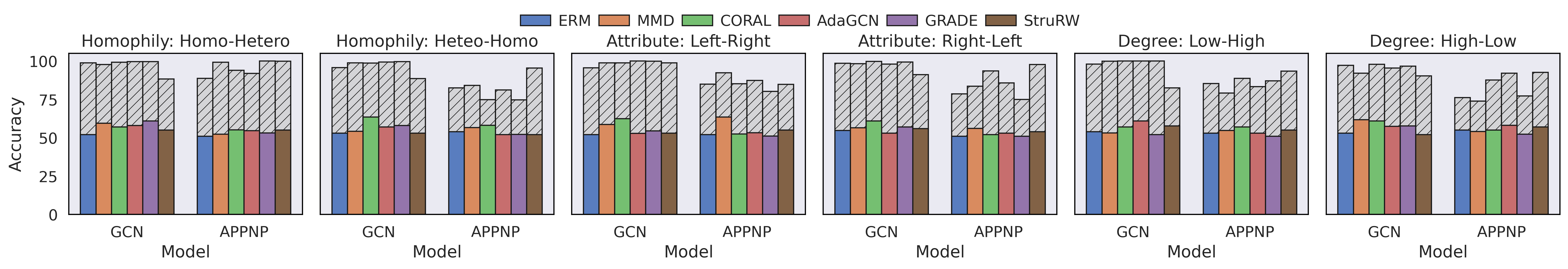}}
\vspace{-10pt}
\subfigure[airport]{\includegraphics[width = \linewidth,trim = 5 7 5 0,clip]{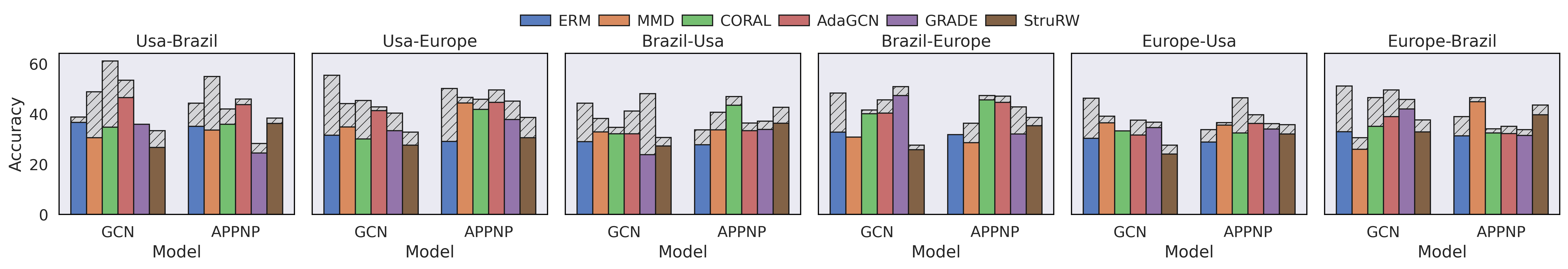}}
\vspace{-5pt}
\subfigure[social]{\includegraphics[width = 0.4\linewidth,trim = 0 7 0 0,clip]{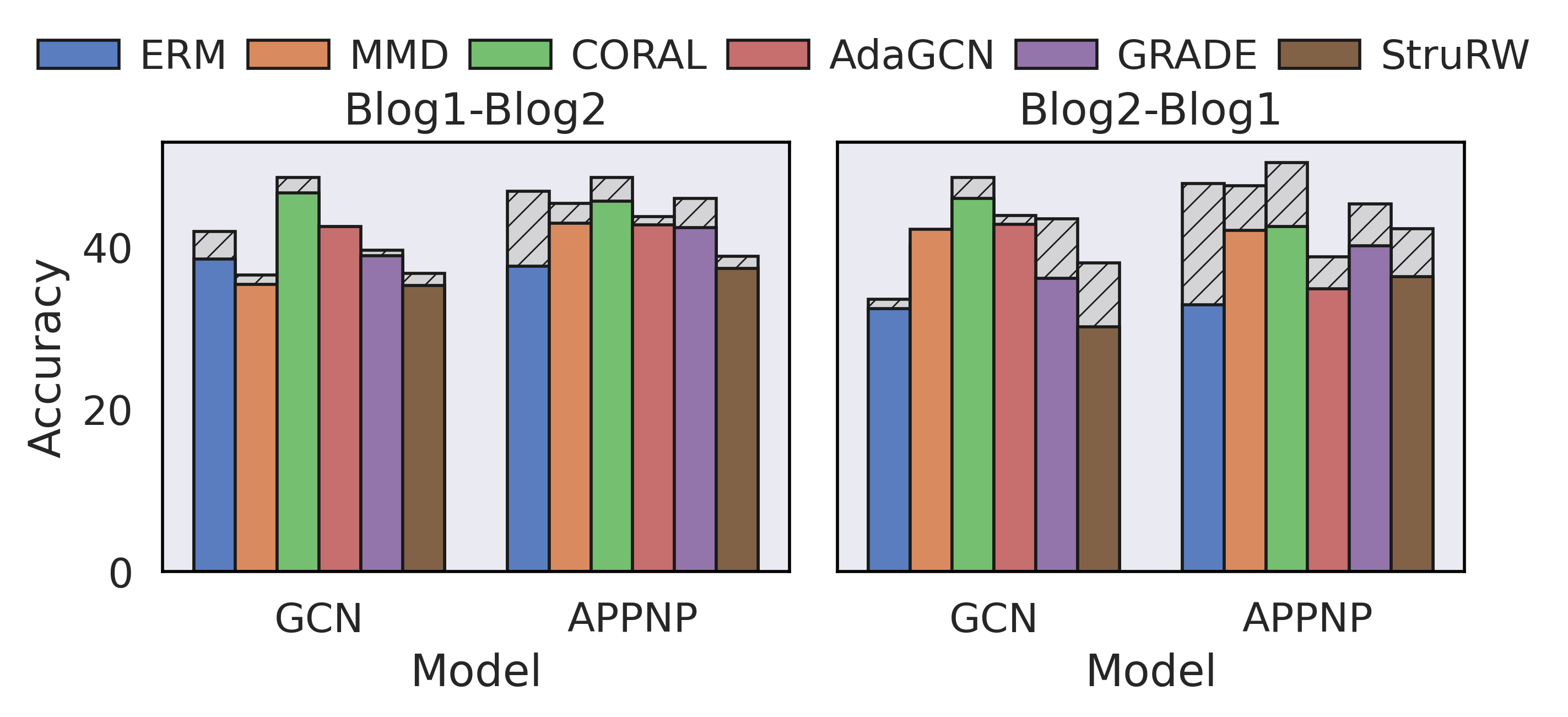}}\hspace{20pt}
\subfigure[citation]{\includegraphics[width = 0.4\linewidth,trim = 0 7 0 0,clip]{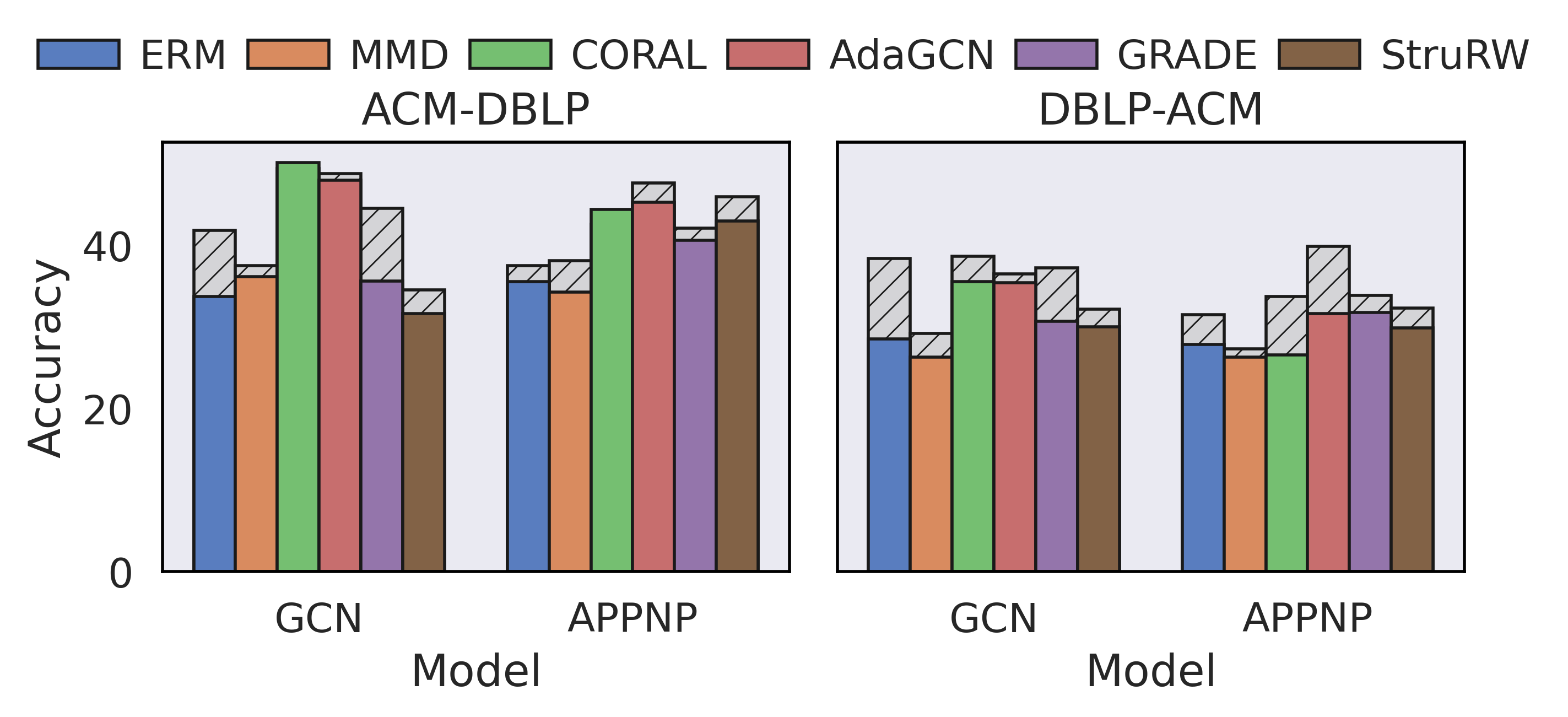}}
\vspace{-5pt}
\caption{Experiment results. Different colors indicate different baseline adaptation methods. Bars with and without hatches indicate direct adaptation and gradual adaptation with \algname, respectively. Our proposed \algname\ consistently achieves better performance than direct adaptation on different backbone GNNs, adaptation methods and datasets. Best viewed in color.}
\vspace{-10pt}
\label{fig:exp_bench}
\end{figure*}

To evaluate the effectiveness of \algname\ in handling large shifts, we carry out experiment on both real-world and synthetic datasets, and the results are shown in Figure~\ref{fig:exp_bench}.
In general, compared to direct adaptation (colored bars w/o hatches), we observe consistent improvements on the performance of a variety of graph DA methods and backbone GNNs on different datasets when applying \algname\ (hatched bars).
Specifically, on real-world datasets, \algname\ achieves an average improvement of 6.77\% on Airport, 3.58\% on Social and 3.43\% on Citation, compared to direct adaptation.
On synthetic CSBM datasets, \algname\ achieves more significant performance, improving various graph DA methods by 36.51\% in average. More result statistics are provided in Appendix~\ref{app:exp_stat}.

Besides, we note a small number of cases where direct adaptation performs better than \algname. According to Theorem~\ref{thm:error}, the target error depends on the source performance, the accumulated training error, and the generalization error. When the shift is mild, the accumulated training error dominates the error bound, so direct adaptation ($T=1$) is preferable. For extremely large shifts, bridging the gap would require many steps, and the resulting linear growth of accumulated training error can outweigh the reduction of the generalization term. Therefore, it is essential to choose an appropriate $T$ that achieves a good balance between accumulated training error and generalization error.

\subsection{Understanding the Gradual Adaptation Process}\label{sec:exp_understand}
\vspace{-5pt}
To better understand the necessity and mechanism of graph GDA, we first visualize the embedding spaces of the CSBM and Citation datasets trained under ERM.
The results are shown in Figure~\ref{fig:exp_emd}, where different colors indicate different classes and different markers represent different domains. 

Firstly, it is shown that large shifts exist in both datasets, as the source ($\bullet$) and target samples ($\times$) are scarcely overlapped.
Besides that, direct adaptation often fails when facing large shifts.
As shown in Figure~\ref{fig:exp_emd}, for the CSBM dataset, though the well-trained source model correctly classifies all source samples (\textcolor{red}{$\bullet$},\textcolor{blue}{$\bullet$}), all target samples from class 0 (\textcolor{blue}{$\times$}) are misclassified as class 1 (\textcolor{red}{$\times$}), due to the large shifts between the source and the target.
In contrast, when adopting graph GDA, we expect smaller shifts between two successive domains, as source ($\bullet$) and target ($\times$) samples are largely overlapped, and the trained classifier correctly classifies most of the target samples.

The embedding space visualization provides further insights in the causes for performance degradation under large shifts, including representation degradation and classifier degradation.
For representation degradation, we observe that although source samples are well separated, target samples are mixed together, indicating that source embedding transformation is suboptimal for the shifted target.
For classifier degradation, while the classification boundary works well for source samples, it fails to classify target samples.
However, when adopting \algname, not only the target samples are well-separated, alleviating representation degradation, but also the classification boundary correctly classifies source and target samples, alleviating classifier degradation.

\begin{figure}[t]
    \centering
    \includegraphics[width=.9\linewidth, trim = 10 15 10 10, clip]{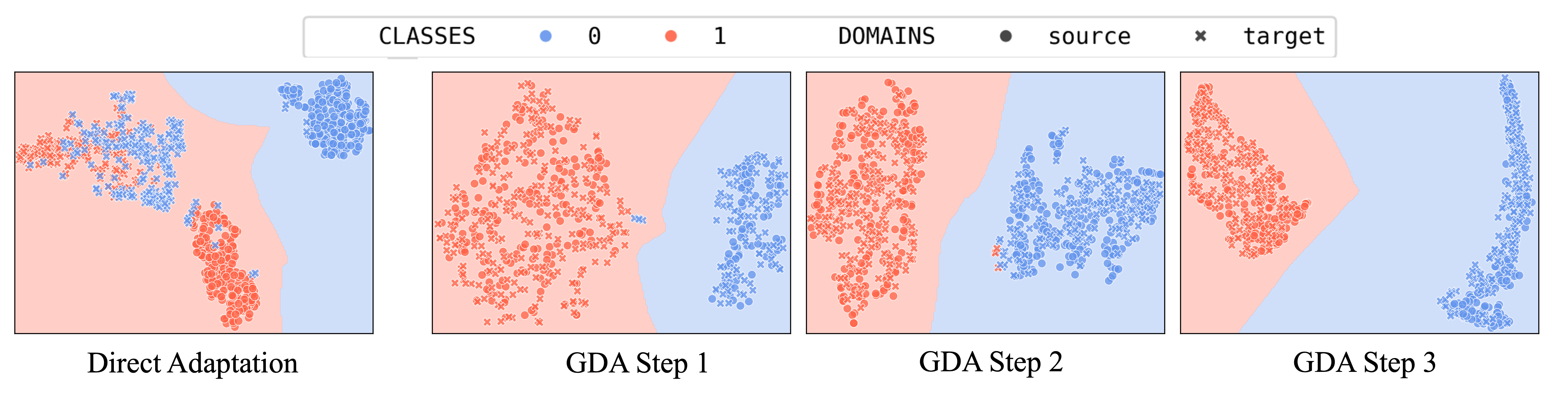}
    \vspace{-10pt}
    \caption{Embedding space of CSBM dataset under homophily shifts. Direct adaptation (left) fails when facing large shifts. GDA (right) correctly classifies most samples in each step, resulting in significant improvement in the classification accuracy. Best viewed in color.}
    \vspace{-10pt}
    \label{fig:exp_emd}
\end{figure}

\vspace{-5pt}
\subsection{Mitigating Domain Shifts}\label{sec:exp_mitigate}
\vspace{-5pt}
\begin{wrapfigure}{r}{0.6\linewidth}
    \vspace{-15pt}
    \includegraphics[width=0.95\linewidth]{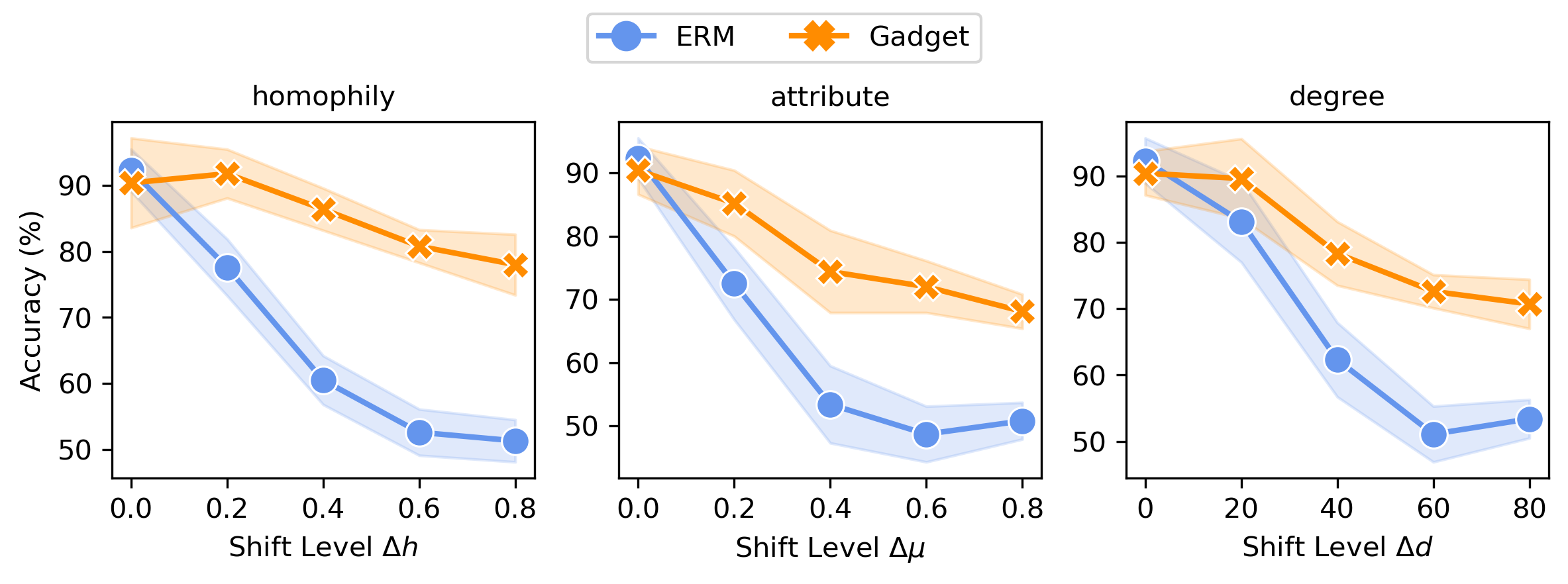}
    \vspace{-15pt}
    \caption{Classification accuracy under different shift levels.}
    \label{fig:shift}
\end{wrapfigure}

To better understand how \algname\ mitigates domain shifts, we test the GNN performance under various shift levels between source $\G_s$ and target $\G_t$.
Specifically, we vary (1) the attribute shift level measured by $\Delta \bm{\mu}=|\text{avg}(\X_s)-\text{avg}(\X_t)|$, (2) the homophily shift level measured by $\Delta h=|h_s-h_t|$, and (3) the degree shift level measured by $\Delta d=|d_s-d_t|$. And the results are shown in Figure~\ref{fig:shift}.

As shown in the results, when the shift level increases, the performance of direct adaptation (ERM) drops rapidly, while the performance of gradual GDA (\algname) is more robust.
Compared to the performance under the mildest shift (left), ERM degrades up to 41.5\% under the largest shift (right), behaving like random guessing as the classification accuracy approaches 0.5 on a binary classification task. However, \algname\ only degrades up to 30.3\% on the largest shift compared to performance under the mildest shift and outperforms ERM by up to 26.7\% on the largest shift.

In addition, it is worth noting that \algname\ underperforms direct adaptation when there domain shift does not exist. This is because the gradual GDA process involves self-training, which may introduce noisy pseudo-labels that mislead the training process. As we reveal in Theorem~\ref{thm:path}, the error bound includes an accumulated error $T\delta$. When domain shift is mild, i.e., $d_{\text{FGW}}(\G_0,\G_1)$ is small, the effects of the accumulated error could be significant. And under such circumstances, as shown in Eq.~\eqref{eq:opt_t}, the optimal number of intermediate steps $T$ should be zero, i.e., direct adaptation.

\vspace{-5pt}
\subsection{Hyperparameter Study}\label{sec:exp_hyperparam}
\vspace{-5pt}
We study how hyperparameters affect the performance and run time, including studies on the number $T$ of intermediate steps and the rank $r$ of low-rank OT. We experiment on the CSBM datasets with 500 nodes.

For the number of intermediate steps $T$, the results are shown in Figure~\ref{fig:step}.
Overall, as $T$ increases, the performance first increases and then decreases, achieving the overall best performance when $T=3$. This phenomenon aligns with our error bound in Theorem~\ref{thm:path}. When $T$ is smaller than the optimal $T$ in Eq.~\eqref{eq:opt_t}, the shifts between two successive graphs is large and the generalization error $ \frac{C_\textnormal W\cdot \fgw^{q}(\G_0,\G_1)}{T^{q-1}}$ dominates the performance; Hence, the performance first improves. However, when $T$ is larger than the optimal $T$ in Eq.~\eqref{eq:opt_t}, the accumulated training error $T\delta$ dominates the performance; Hence, the performance degrades.
Besides, we observe that the textit{training time} increases almost linearly w.r.t. $T$, as the gradual domain adaptation process involves repeated training the model for $T$ times.
Based on the above observation, we choose $T=3$ for the benchmark experiments as it achieves good trade-off between performance and efficiency.

For the choice of rank $r$, the results are in Figure~\ref{fig:rank}.
Overall, as $r$ increases, the performance first increases and then fluctuates at a high level.
When $r$ is small, the transformation in Eq.~\eqref{eq:transform} projects source and target graphs to small graphs, causing information loss during the transformation; Hence, the performance degrades.
However, when $r$ is large enough, the transformation preserves most information in the source and target graphs; Hence achieving relatively stable performance.
Besides, we observe that the \textit{generation time} increases almost linear w.r.t. $r$, which aligns with our complexity analysis of $\mathcal{O}(Lndr+Ln^2r)$.

\begin{figure}[htbp]
    \centering
    \subfigure[Study on the number of intermediate steps $T$.]{\includegraphics[width = 0.49\linewidth,trim = 0 0 0 0,clip]{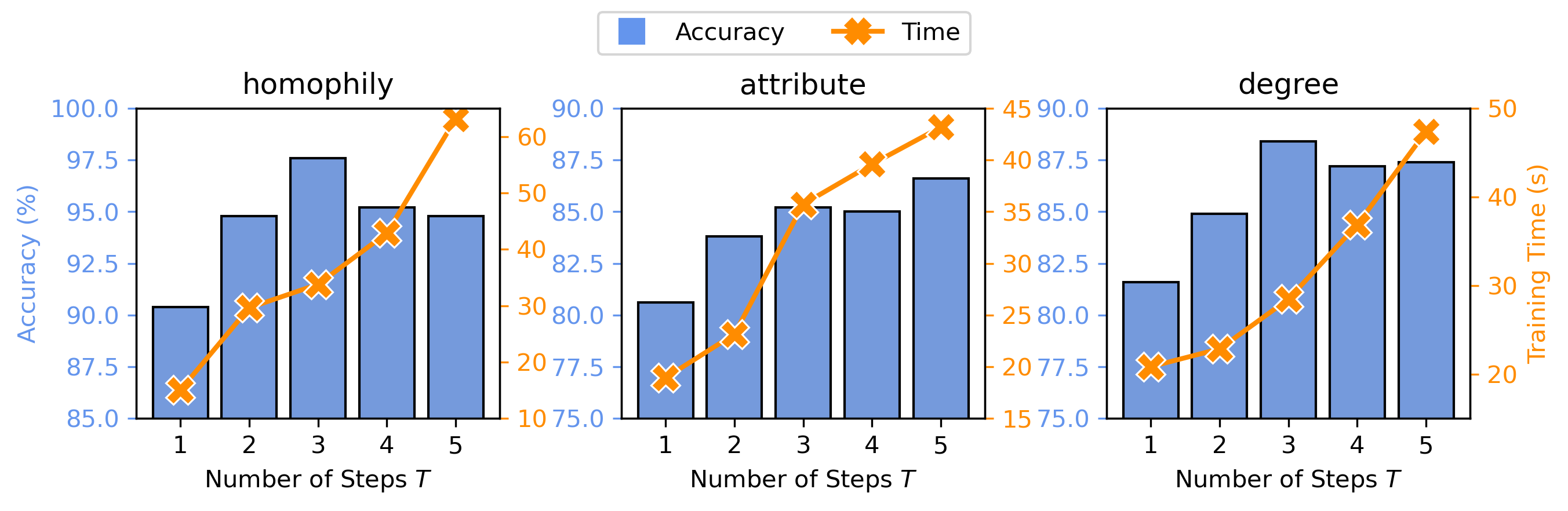}\label{fig:step}}
    \hfill
    \subfigure[Study on the rank $r$.]{\includegraphics[width = 0.49\linewidth,trim = 0 0 0 0,clip]{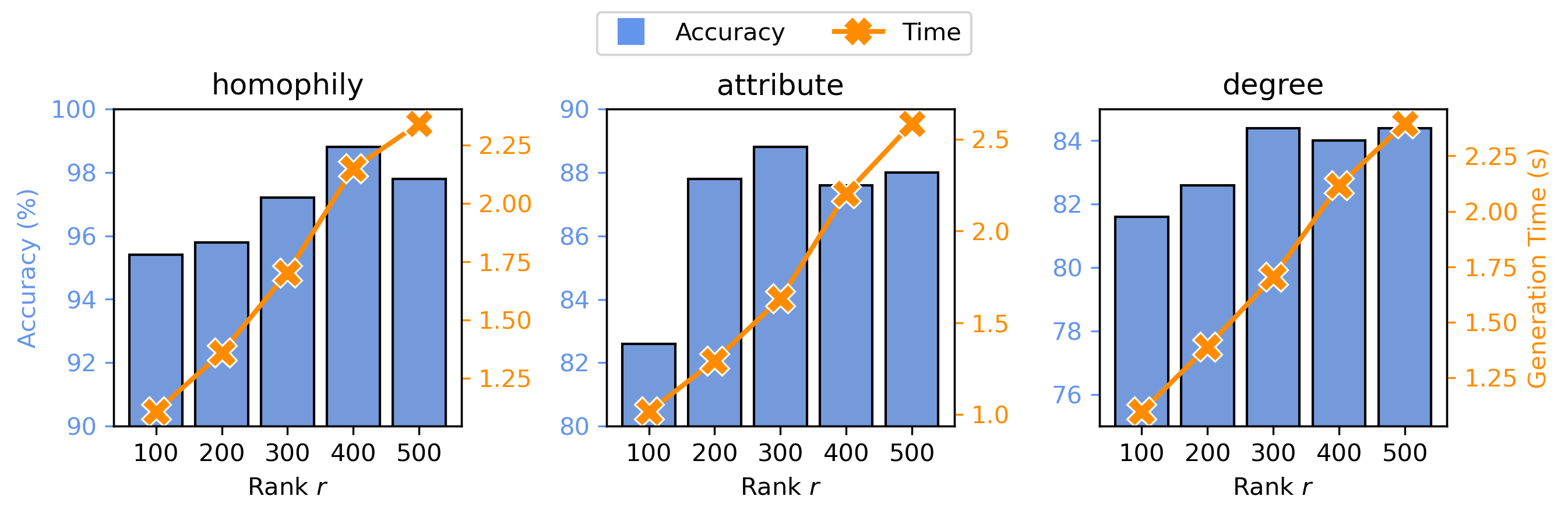}\label{fig:rank}}
    \vspace{-10pt}
    \caption{Hyperparameter Study.}
    \label{fig:app_step}
\end{figure}

\begin{wrapfigure}{r}{0.2\textwidth}
    \centering
    \vspace{-10pt}
    \includegraphics[width = \linewidth, trim=0 10 0 10]{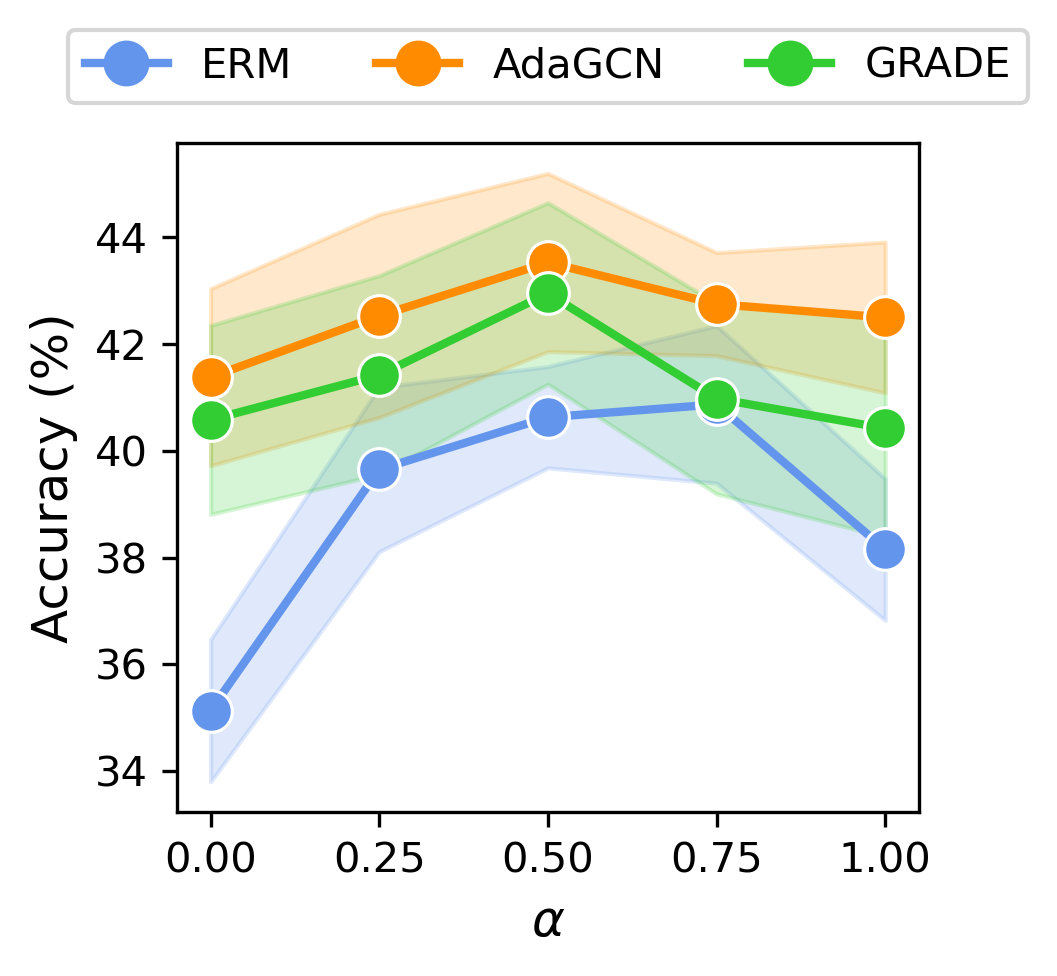}
    \vspace{-20pt}
    \caption{Study on $\alpha$.}
    \vspace{-20pt}
    \label{fig:study_alpha}
\end{wrapfigure}
Besides, we study the effect of $\alpha$ on balancing the importance of node features and graph structure. We report the average performance on the Airport dataset in Figure~\ref{fig:study_alpha}. Overall,  \algname\ is relative robust to different selections with $\alpha\in(0,1)$, under which the FGW distance consider both node features and graph structure. However, when $\alpha=0$, i.e., Wasserstein distance considering features only, or $\alpha=1$, i.e., GW distance considering structure only, we observe a performance degradation. This validates that both features and structure are crucial for the construction of the optimal path.

\vspace{-5pt}
\subsection{Path quality Analysis}\label{sec:exp_path}
\vspace{-5pt}
\begin{wrapfigure}{r}{0.22\textwidth}
    \vspace{-15pt}
    \includegraphics[width = \linewidth, trim=0 10 0 10]{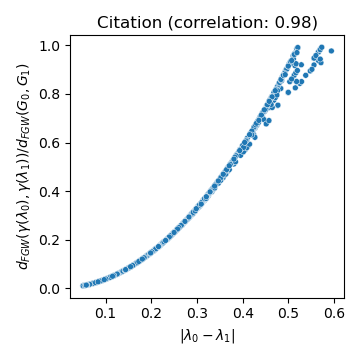}
    \vspace{-20pt}
    \caption{Path quality.}
    \label{fig:app_geo}
\end{wrapfigure}

As Theorem~\ref{thm:path} suggests, we expect the intermediate graphs lie on the FGW geodesics connecting source and target graphs.
Following Definition~\ref{def:fgw_geo}, given any two values $\lambda_0,\lambda_1\in[0,1]$, the FGW distance between the generated graphs is expected to be proportional to the difference between the two values.
We evaluate such correlation on the Citation dataset with results shown in Figure~\ref{fig:app_geo}.

The results suggests that $\fgw(\gamma(\lambda_0),\gamma(\lambda_1)) / \fgw(\G_0,\G_1)$ is strongly correlated with $|\lambda_0-\lambda_1|$, with a Pearson correlation score of nearly 1. 
This validates that the generated graphs indeed lie along the FGW geodesics, thereby ensuring the effectiveness of graph GDA.

\section{Conclusions}\label{sec:con}
In this paper, we tackle large shifts on graphs, and propose \algname, the first graph gradual domain adaptation framework to gradually adapt from source to target graph along the FGW geodesics.
We establish a theoretical foundation by deriving an error bound for graph GDA based on the FGW discrepancy, motivated by which, we reveal that the optimal path minimizing the error bound lies on the FGW geodesics.
A practical algorithm is further proposed to generate graphs on the FGW geodesics, complemented by entropy-based confidence for pseudo-label denoising, which enhances the self-training paradigm for graph GDA.
Extensive experiments demonstrate the effectiveness of \algname, enhancing various graph DA methods on different real-world datasets significantly.

\section*{Acknowledgement}
This work is supported by NSF (2118329, 2505932, 2537827, and 2416070) and AFOSR (FA9550-24-1-0002).
The content of the information in this document does not necessarily reflect the position or the policy of the Government, and no official endorsement should be inferred.  The U.S. Government is authorized to reproduce and distribute reprints for Government purposes notwithstanding any copyright notation here on.

\section*{Broader Impact Statement}\label{app:impact}
This paper presents work whose goal is to advance the field of Graph Machine Learning and Transfer Learning. There are many potential societal consequences of our work, none of which we feel must be specifically highlighted here.

\section*{GenAI Usage Disclosure}
In this manuscript, generative AI tool is used to edit and improve the quality of the text, including checking the spelling, grammar, punctuation and clarity.

\newpage
\bibliography{main}
\bibliographystyle{tmlr}

\newpage
\appendix

\onecolumn
\textbf{\Large Appendix}

\addtocontents{toc}{\protect\setcounter{tocdepth}{2}}

\tableofcontents

\clearpage

\newpage
\appendix
\onecolumn

\section{Proof}\label{app:proof}
In this section, we provide detailed proof for all the Lemmas and Theorems.
We first prove the H\"older continuity in Section~\ref{app:proof-holder} (Lemma~\ref{lemma:cont}), then we prove the error bounds for node-level, edge-level and graph-level tasks in Section~\ref{app:proof-bound} (Theorem~\ref{thm:error}). Finally, we prove the optimality of the FGW geodesic for graph GDA in Section~\ref{app:proof-path} (Theorems~\ref{thm:path} and \ref{thm:geodesic}).

\subsection{Proof for H\"older Continuity}\label{app:proof-holder}
\cont*
\begin{proof}
    We start with the node-level tasks based on Assumptions~\ref{assume:regular} and \ref{assume:node-regular}.
    Given two graphs $\G_0=(\V_0,\A_0,\X_0),\G_1=(\V_1,\A_1,\X_1)$.
    We denote the $l$-th layer embedding as $\X^{(l)}=f^{(l)}\circ \cdots \circ f^{(1)}(\G)$, with corresponding graph $\G^{(l)}=(\V,\A,\X^{(l)})$.
    Let the marginal constraints be $\bm{\mu}_0=\text{Unif}(|\V_0|),\bm{\mu}_1=\text{Unif}(|\V_1|)$, for \textit{any coupling} $\bm{\pi}\in\Pi(\bm{\mu}_0,\bm{\mu}_1)$, we have
    \begin{equation}\label{eq:app_cw}
        \begin{aligned}
            &|\xi(f,\G_0)-\xi(f,\G_1)|\\
            =&\left|\frac{1}{|\V_0|}\sum_{u\in\V_0}\epsilon(f,\{\A_0(u,u'), \X_0(u')\}_{u'\in\V_0})-\frac{1}{|\V_1|}\sum_{v\in\V_1}\epsilon(f,\{\A_1(v,v'), \X_1(v')\}_{v'\in\V_1})\right|\\
            =&\left|\sum_{u\in\V_0}\bm{\mu}_0(u)\epsilon(f,\{\A_0(u,u'), \X_0(u')\}_{u'\in\V_0})-\sum_{v\in\V_1}\bm{\mu}_1(v)\epsilon(f,\{\A_1(v,v'), \X_1(v')\}_{v'\in\V_1})\right|\\
            =&\left|\mathbb{E}_{(u,v)\sim\bm{\pi}}\left(\epsilon(f,\{\A_0(u,u'), \X_0(u')\}_{u'\in\V_0})-\epsilon(f,\{\A_1(v,v'), \X_1(v')\}_{v'\in\V_1})\right)\right|\\
            \leq& \mathbb{E}_{(u,v)\sim\bm{\pi}}\left|\epsilon(f,\{\A_0(u,u'), \X_0(u')\}_{u'\in\V_0})-\epsilon(f,\{\A_1(v,v'), \X_1(v')\}_{v'\in\V_1})\right|\\
            \leq&\mathbb{E}_{(u,v)\sim\bm{\pi}} \cwn \|f(\G_0)_{u}-f(\G_1)_{v}\|^q_{\mathcal{Y}_n} ~~\text{(\eqref{eq:hol-node})}
        \end{aligned}
    \end{equation}
    Now consider the $l$-th layer GNN $f^{(l)}=\text{ReLU}\circ \text{Linear}\circ g^{(l)}$, with input graph $\G^{(l-1)}$. 
    For ReLU activation, given two inputs $\X_0,\X_1$, it is easy to show that
    \begin{equation}\label{eq:app_relu}
        \|\text{ReLU}(\X_0)-\text{ReLU}(\X_1)\|_\mathcal{X} \leq \|\X_0-\X_1\|_\mathcal{X}
    \end{equation}
    For linear layer $\text{Linear}(\bm{x})=\bm{Wx}+\bm{b}$, given two inputs $\X_0,\X_1$, we can show that
    \begin{equation}\label{eq:app_linear}
        \begin{aligned}
            \|\text{Linear}(\X_0)-\text{Linear}(\X_1)\|_\mathcal{X} &\leq \|\bm{W}\|\|\bm{X}_0-\bm{X}_1\|_\mathcal{X}\\
            &\leq \cl\|\bm{X}_0-\bm{X}_1\|_\mathcal{X} ~~\text{(\ref{assump-cl})}
        \end{aligned}
    \end{equation}
    Combining \eqref{eq:app_relu} and \eqref{eq:app_linear}, for any coupling $\bm{\pi}\in\Pi(\bm{\mu}_0,\bm{\mu}_1)$, we have
    \begin{equation}\label{eq:app_gnn}
        \begin{aligned}
            &\|f^{(l)}(\G_0^{(l-1)})_{u}-f^{(l)}(\G_1^{(l-1)})_{v}\|^q_{\mathcal{X}}\\
            =&\|\text{ReLU}\circ \text{Linear}\circ g^{(l)}(\G^{l-1})-\text{ReLU}\circ \text{Linear}\circ g^{(l)}(\G^{l-1})\|_{\mathcal{X}}^q\\
            \leq &\cl \|g^{(l)}(\G_0^{(l-1)})_u - g^{(l)}(\G_1^{(l-1)})_v\|_\mathcal{X}^q\\
            \leq &\cc\cl\w^q\left(\mathcal{N}_{\G^{(l-1)}_0}\!(u),\mathcal{N}_{\G^{(l-1)}_1}\!(v)\right) ~~\text{(\ref{assump-cc})}\\
            =& \cc\cl\!\!\!\inf_{\bm{\tau}\in\Pi(\bm{\mu}_0,\bm{\mu}_1)}\!\!\!\mathbb{E}_{(u',v')\sim\bm{\tau}}\left[\alpha|\A_0(u,u')-\A_1(v,v')|^q+(1-\alpha)\|\X_0^{(l-1)}(u')-\X_1^{(l-1)}(v')\|_{\mathcal{X}}^q\right]\\
            \leq& \cc\cl \mathbb{E}_{(u',v')\sim\bm{\pi}}\left[\alpha|\A_0(u,u')-\A_1(v,v')|^q+(1-\alpha)\|\X_0^{(l-1)}(u')-\X_1^{(l-1)}(v')\|_{\mathcal{X}}^q\right]\\
            =&\cc\cl\left(\alpha\mathbb{E}_{(u',v')\sim \bm{\pi}}|\A_0(u,u')-\A_1(v,v')|^q + (1-\alpha)\|f^{(l-1)}(\G_0^{(l-2)})_{u}-f^{(l-1)}(\G^{(l-2)}_1)_{v}\|_{\mathcal{X}}^q\right)
        \end{aligned}
    \end{equation}

    By repeatedly applying \eqref{eq:app_gnn} to \eqref{eq:app_cw}, we have:
    \begin{equation}\label{eq:multi-gnn}
    \small
        \begin{aligned}
            &|\xi(f,\G_0)-\xi(f,\G_1)|\\
            &\leq\mathbb{E}_{(u,v)\sim\bm{\pi}} \cwn \|f(\G_0)_{u}-f(\G_1)_{v}\|^q_{\mathcal{Y}_n}\\
            &=\mathbb{E}_{(u,v)\sim\bm{\pi}} \cwn \|f^{(L)}(\G_0^{(L-1)})_{u}-f^{(L)}(\G_1)_{v}^{(L-1)}\|^q_{\mathcal{Y}_n}\\
            &\leq\cwn\cc\cl\left(\alpha\mathbb{E}_{(u,v)\sim\bm{\pi}\atop (u',v')\sim \bm{\pi}}|\A_0(u,u')-\A_1(v,v')|^q + (1-\alpha)\mathbb{E}_{(u,v)\sim\bm{\pi}}\|f^{(L-1)}(\G_0^{(L-2)})_{u}-f^{(L-1)}(\G^{(L-2)}_1)_{v}\|_{\mathcal{X}}^q\right) ~~\text{(\eqref{eq:app_gnn})}\\
            &\leq \cwn\cc\cl \left(\alpha\sum_{l=0}^{L-1}[\cc\cl(1-\alpha)]^l\mathbb{E}_{(u,v)\sim\bm{\pi}\atop (u',v')\sim \bm{\pi}}|\A_0(u,u')-\A_1(v,v')|^q + (\cc\cl)^{L-1}(1-\alpha)^L\mathbb{E}_{(u,v)\sim\bm{\pi}}\|\X_0(u)-\X_1(v)\|_{\mathcal{X}}^q\right)\\
            &= \cn \left(\beta\mathbb{E}_{(u,v)\sim\bm{\pi}\atop (u',v')\sim \bm{\pi}}|\A_0(u,u')-\A_1(v,v')|^q + (1-\beta)\mathbb{E}_{(u,v)\sim\bm{\pi}}\|\X_0(u)-\X_1(v)\|_{\mathcal{X}}^q\right)\\
            &\leq \cn \inf_{\bm{\pi}\in\Pi(\bm{\mu}_0,\bm{\mu}_1)}\left(\beta\mathbb{E}_{(u,v)\sim\bm{\pi}\atop (u',v')\sim \bm{\pi}}|\A_0(u,u')-\A_1(v,v')|^q + (1-\beta)\mathbb{E}_{(u,v)\sim\bm{\pi}}\|\X_0(u)-\X_1(v)\|_{\mathcal{X}}^q\right)\\
            &=\cn d_{\text{FGW};q,\beta}^q(\G_0^{(L)},\G_1^{(L)})
        \end{aligned}
    \end{equation}
    where $\beta,\cn$ are defined as
    \begin{equation*}
        \begin{aligned}
        \left\{
            \begin{aligned}
                &\beta=\frac{\alpha\left(1-(\cc\cl(1-\alpha))^L\right)}{\alpha+(1-\alpha)(\cc\cl(1-\alpha))^{L-1}-(\cc\cl(1-\alpha))^{L}}\\
                &\cn=\cwn\cc\cl\frac{\alpha+(1-\alpha)(\cc\cl(1-\alpha))^{L-1}-(\cc\cl(1-\alpha))^L}{1-\cc\cl(1-\alpha)}\\
            \end{aligned}
        \right.
        \end{aligned}
    \end{equation*}

    To this point, we prove the node-level loss $\xi_n$ is H\"older continuous w.r.t. the $\beta$-FGW distance.

    We can derive a similar proof for edge-level tasks.
    Let the marginal constraints be $\bm{\mu}_0=\text{Unif}(|\V_0|),\bm{\mu}_1=\text{Unif}(|\V_1|)$, for \textit{any coupling} $\bm{\pi}\in\Pi(\bm{\mu}_0,\bm{\mu}_1)$, we have
    \begin{equation*}
        \begin{aligned}
            &|\xi_e(f_e,\G_0)-\xi_e(f_e,\G_1)|\\
            =&\left|\frac{1}{|\V_0|^2}\sum_{u,u'\in\V_0}\epsilon_e\left(f_e(\G_0)_{(u,u')}\right)-\frac{1}{|\V_1|^2}\sum_{v,v'\in\V_1}\epsilon_e\left(f_e(\G_1)_{(v,v')}\right)\right|\\
            =&|\mathbb{E}_{u,u'\sim\bm{\mu}_0}\epsilon_e\left(f_e(\G_0)_{(u,u')}\right)-\mathbb{E}_{v,v'\sim\bm{\mu}_1}\epsilon_e\left(f_e(\G_1)_{(v,v')})\right)|\\
            \leq& \mathbb{E}_{(u,v),(u',v')\sim\bm{\pi}}|\epsilon_e\left(f_e(\G_0)_{(u,u')}\right)-\epsilon_e\left(f_e(\G_1)_{(v,v')})\right)|\\
            \leq& \cwe\cdot\mathbb{E}_{(u,v),(u',v')\sim\bm{\pi}}\|f_{e}(\G_0)_{(u,u')}-f_e(\G_1)_{(v,v')}\|_{\mathcal{Y}_e}^q ~~\text{(\eqref{eq:hol-edge})}\\
            =&\cwe\cdot \mathbb{E}_{(u,v),(u',v')\sim\bm{\pi}}\|\phi(f_n(\G_0)_{u}, f_n(\G_0)_{u'})-\phi(f_n(\G_1)_{v}, f_n(\G_1)_{v'})\|_{\mathcal{Y}_e}^q\\
            \leq& \cwe\cp\cdot \mathbb{E}_{(u,v),(u',v')\sim\bm{\pi}}(\|f_n(\G_0)_{u}-f_n(\G_1)_{v}\|^q_\mathcal{X}+\|f_n(\G_0)_{u'}-f_n(\G_1)_{v'}\|^q_\mathcal{X}) ~~\text{(\ref{assump-cr-edge})}\\
            =&\cwe\cp\cdot \mathbb{E}_{(u,v),(u',v')\sim\bm{\pi}}(\|g^{(L)}(\G_0^{(L-1)})_{u}-g^{(L)}(\G_1^{(L-1)})_{v}\|_\mathcal{X}^q + \|g^{(L)}(\G_0^{(L-1)})_{u'}-g^{(L)}(\G_1^{(L-1)})_{v'}\|_\mathcal{X}^q)\\
            =& 2\cwe\cp\cdot\mathbb{E}_{(u,v)\sim\bm{\pi}}\|g^{(L)}(\G_0^{(L-1)})_{u}-g^{(L)}(\G_1^{(L-1)})_{v}\|_\mathcal{X}^q\\
        \end{aligned}
    \end{equation*}
    Similar to Lemma~\ref{lemma:cont}, we can leverage \eqref{eq:app_gnn} and \eqref{eq:multi-gnn} to derive the following inequality
    \begin{equation*}
        \begin{aligned}
            &|\xi_e(f_e,\G_0)-\xi_e(f_e,\G_1)|\\
            =& 2\cwe\cp\cdot\mathbb{E}_{(u,v)\sim\bm{\pi}}\|g^{(L)}(\G_0^{(L-1)})_{u}-g^{(L)}(\G_1^{(L-1)})_{v}\|_\mathcal{X}^q\\
            \leq& 2\cwe\cp\cc\cl\cdot\w^q\left(\mathcal{N}_{\G^{(L-1)}_0}\!(u),\mathcal{N}_{\G^{(L-1)}_1}\!(v)\right) ~~\text{(\ref{assump-cc})}\\
            \leq&\ce \left(\beta\mathbb{E}_{(u,v)\sim\bm{\pi}\atop (u',v')\sim \bm{\pi}}|\A_0(u,u')-\A_1(v,v')|^q + (1-\beta)\mathbb{E}_{(u,v)\sim\bm{\pi}}\|\X_0(u)-\X_1(v)\|_{\mathcal{X}}^q\right)\\
            \leq& \ce\inf_{\bm{\pi}\in\Pi(\bm{\mu}_0,\bm{\mu}_1)}\left(\beta\mathbb{E}_{(u,v)\sim\bm{\pi}\atop (u',v')\sim \bm{\pi}}|\A_0(u,u')-\A_1(v,v')|^q + (1-\beta)\mathbb{E}_{(u,v)\sim\bm{\pi}}\|\X_0(u)-\X_1(v)\|_{\mathcal{X}}^q\right)\\
            =&\ce d_{\text{FGW};q,}^q\left(\G_0^{(L)},\G_1^{(L)}\right)\\
        \end{aligned}
    \end{equation*}
    where $\beta,\ce$ are defined as
    \begin{equation*}
        \begin{aligned}
        \left\{
            \begin{aligned}
                &\beta=\frac{\alpha\left(1-(\cc\cl(1-\alpha))^L\right)}{\alpha+(1-\alpha)(\cc\cl(1-\alpha))^{L-1}-(\cc\cl(1-\alpha))^{L}}\\
                &\ce=2\cwe\cp\cc\cl\frac{\alpha+(1-\alpha)(\cc\cl(1-\alpha))^{L-1}-(\cc\cl(1-\alpha))^L}{1-\cc\cl(1-\alpha)}\\
            \end{aligned}
        \right.
        \end{aligned}
    \end{equation*}
    To this point, we prove the edge-level loss $\xi_e$ is H\"older continuous w.r.t. the $\beta$-FGW distance.

    Finally, we prove for graph-level tasks
    \begin{equation*}
        |\xi_g(f_g,\G_0)-\xi_g(f_g,\G_1)|\leq \cwg\cdot \|f_g(\G_0)-f_g(\G_1)\|^q_{\mathcal{Y}}.
    \end{equation*}
    Similar to the proof for Lemma~\ref{lemma:cont}, we can leverage \eqref{eq:app_gnn} to derive the following inequality
    \begin{equation*}
        \begin{aligned}
            |\xi_g(f_g,\G_0)-\xi_g(f_g,\G_1)|&\leq \cwg\cdot \|f_g(\G_0)-f_g(\G_1)\|^q_{\mathcal{Y}_g}\\
            &=\cwg\|r\circ f(\G_0)-r\circ f(\G_1)\|_\mathcal{X}\\
            &\leq \cwg C_\textnormal{r}\cdot\mathbb{E}_{(u,v)\sim \bm{\pi}}\|f(\G_0)_u-f(\G_1)_v\|_\mathcal{X}~~\text{(\ref{assump-cr-graph})}\\
            &= \cwg C_\textnormal{r}\cdot\mathbb{E}_{(u,v)\sim\bm{\pi}}\|f^{(L)}(\G^{(L-1)}_0)_u-f^{(L)}(\G^{(L-1)}_1)_v\|_{\mathcal{X}}\\
            &\leq \cwg C_\textnormal{r}\cc\cl\w^q\left(\mathcal{N}_{\G^{(L-1)}_0}\!(u),\mathcal{N}_{\G^{(L-1)}_1}\!(v)\right) ~~\text{(\ref{assump-cc})}\\
            &\leq \cg \left(\beta\mathbb{E}_{(u,v)\sim\bm{\pi}\atop (u',v')\sim \bm{\pi}}|\A_0(u,u')-\A_1(v,v')|^q + (1-\beta)\mathbb{E}_{(u,v)\sim\bm{\pi}}\|\X_0(u)-\X_1(v)\|_{\mathcal{X}}^q\right)\\
            &\leq \cg\inf_{\bm{\pi}\in\Pi(\bm{\mu}_0,\bm{\mu}_1)}\left(\beta\mathbb{E}_{(u,v)\sim\bm{\pi}\atop (u',v')\sim \bm{\pi}}|\A_0(u,u')-\A_1(v,v')|^q + (1-\beta)\mathbb{E}_{(u,v)\sim\bm{\pi}}\|\X_0(u)-\X_1(v)\|_{\mathcal{X}}^q\right)\\
            &=\cg d_{\text{FGW};q,\beta}^q(\G_0^{(L)},\G_1^{(L)})\\
        \end{aligned}
    \end{equation*}
    where $\beta,\cg$ are defined as
    \begin{equation*}
        \begin{aligned}
        \left\{
            \begin{aligned}
                &\beta=\frac{\alpha\left(1-(\cc\cl(1-\alpha))^L\right)}{\alpha+(1-\alpha)(\cc\cl(1-\alpha))^{L-1}-(\cc\cl(1-\alpha))^{L}}\\
                &\cg=\cwg C_\textnormal{r}\cc\cl\frac{\alpha+(1-\alpha)(\cc\cl(1-\alpha))^{L-1}-(\cc\cl(1-\alpha))^L}{1-\cc\cl(1-\alpha)}\\
            \end{aligned}
        \right.
        \end{aligned}
    \end{equation*}
    To this point, we prove the graph-level loss $\xi_g$ is H\"older continuous w.r.t. the $\beta$-FGW distance.
\end{proof}

\subsection{Proof for Error Bound}\label{app:proof-bound}
\bound*
\begin{proof}
    For any intermediate stage $t=1,2,...,T$, we first consider node-level loss $\xi_n$:
    \begin{equation}\label{eq:app-node-1}
        \begin{aligned}
            &\left|\xi_n(f_{n_{t-1}},\H_{t})-\xi_n(f_{n_t},\H_{t})\right|\\
            ={}&\left|\frac{1}{|\V_{t}|}\sum_{u\in\V_t}\epsilon_n(f_{n_{t-1}},\mathcal{N}_{\H_t}(u))-\frac{1}{|\V_{t}|}\sum_{u\in\V_t}\epsilon_n(f_{n_t},\mathcal{N}_{\H_t}(u))\right|\\
            \leq{}&\frac{1}{|\V_{t}|}\sum_{u\in\V_t}\left|\epsilon_n(f_{n_{t-1}},\mathcal{N}_{\H_t}(u))-\epsilon_n(f_{n_t},\mathcal{N}_{\H_t}(u))\right|\\ 
            \leq{}& \frac{1}{|\V_{t}|}\sum_{u\in\V_t}\cfn\cdot\left\|f_{n_{t-1}}(\mathcal{N}_{\H_t}(u))-f_{n_t}(\mathcal{N}_{\H_t}(u))\right\|_{\mathcal{Y}_n} ~~\text{(\ref{assump-cf})}\\
            ={}& \cfn\cdot\frac{1}{|\V_{t}|}\sum_{u\in\V_t}\left\|f_{n_{t-1}}(\H_t)_u-f_{n_t}(\H_t)_u\right\|_{\mathcal{Y}_n}\\
            ={}& \cfn\cdot\left\|f_{n_{t-1}}(\H_t)-f_{n_t}(\H_t)\right\|_{\mathcal{Y}_n}\\
            \leq{}& \cfn\cdot \delta
        \end{aligned}
    \end{equation}
    Similarly, for edge-level loss $\xi_e$, we have:
    \begin{equation}\label{eq:app-edge-1}
        \begin{aligned}
            &\left|\xi_e(f_{e_{t-1}},\H_t)-\xi_e(f_{e_t},\H_t)\right|\\
            ={}&\left|\frac{1}{|\V_{t}|^2}\sum_{u,u'\in\V_t}\epsilon_e(f_{e_{t-1}}(\H_t)_{(u,u')})-\frac{1}{|\V_{t}|^2}\sum_{u\in\V_t}\epsilon_e(f_{e_t}(\H_t)_{(u,u')})\right|\\
            \leq{}&\frac{1}{|\V_{t}|^2}\sum_{u,u'\in\V_t}\left|\epsilon_e(f_{e_{t-1}}(\H_t)_{(u,u')})-\sum_{u\in\V_t}\epsilon_e(f_{e_t}(\H_t)_{(u,u')})\right|\\ 
            \leq{}& \frac{1}{|\V_{t}|^2}\sum_{u\in\V_t}\cfe\cdot\left\|f_{e_{t-1}}(\H_t)_{(u,u')}-f_{e_t}(\H_t)_{(u,u')}\right\|_{\mathcal{Y}_e} ~~\text{(\ref{assump-cf})}\\
            ={}& \cfe\cdot\frac{1}{|\V_{t}|^2}\sum_{u\in\V_t}\left\|f_{e_{t-1}}(\H_t)_{(u,u')}-f_{e_t}(\H_t)_{(u,u')}\right\|_{\mathcal{Y}_e}\\
            ={}& \cfe\cdot\left\|f_{e_{t-1}}(\H_t)-f_{e_t}(\H_t)\right\|_{\mathcal{Y}_e}\\
            \leq{}& \cfe\cdot \delta
        \end{aligned}
    \end{equation}
    Similarly, for graph-level loss $\xi_g$, we have:
    \begin{equation}\label{eq:app-graph-1}
        \begin{aligned}
            &\left|\xi_g(f_{g_{t-1}},\H_t)-\xi_g(f_{e_t},\H_t)\right|\\
            \leq{}& \cfg\cdot\left\|f_{e_{t-1}}(\H_t)-f_{e_t}(\H_t)\right\|_{\mathcal{Y}_g} ~~\text{(\ref{assump-cf})}\\
            \leq{}& \cfg\cdot \delta
        \end{aligned}
    \end{equation}

    For simplicity, we slightly abuse $\cf$ as a general notation for $\cfn,\cfe,\cfg$, and abuse $f$ as a general notation for $f_n,f_e,f_g$.
    Therefore, \eqref{eq:app-node-1}, \eqref{eq:app-edge-1}, and \eqref{eq:app-graph-1} can be uniformly written as
    \begin{equation}\label{eq:app-gap}
        \left|\xi(f_{{t-1}},\H_t)-\xi(f_{t},\H_t)\right|\leq \cf\cdot \delta
    \end{equation}
    Based on \eqref{eq:app-gap} and Lemma~\ref{lemma:cont}, we have
    \begin{equation*}
        \begin{aligned}
            &\left|\xi(f_{t-1},\H_{t-1})-\xi(f_t,\H_t)\right|\\
            \leq{}& \left|\xi(f_{t-1},\H_{t-1})-\xi(f_t,\H_{t-1})\right| + \left|\xi(f_t,\H_{t-1})-\xi(f_t,\H_t)\right|\\
            \leq{}& \cf\cdot \delta + C\cdot d_{\textnormal{FGW};q,\beta}^q(\H_{t-1},\H_t)
        \end{aligned}
    \end{equation*}
    Therefore, we have:
    \begin{equation*}
        \begin{aligned}
            \xi(f_T,\G_1)={}&\xi(f_T,\H_T)\\
            ={}&\xi(f_0,\H_0)+|\xi(f_T,\H_T)-\xi(f_0,H_0)|\\
            ={}&\xi(f_0,\H_0)+\left|\sum_{t=1}^T\left(\xi(f_{t-1},\H_t)-\xi(f_t,\H_t)\right)\right|\\
            \leq{}& \xi(f_0,\H_0)+\sum_{t=1}^T\left|\xi(f_{t-1},\H_t)-\xi(f_t,\H_t)\right|\\
            \leq{}& \xi(f_0,\H_0) + \sum_{t=1}^T\left(\cf\cdot \delta + C\cdot d_{\textnormal{FGW};q,\beta}^q(\H_{t-1},\H_t)\right)\\
            ={}&\xi(f_0,\H_0)+\cf\cdot \delta T + C\sum_{t=1}^Td_{\textnormal{FGW};q,\beta}^q(\H_{t-1},\H_t)\\
            ={}&\xi(f_0,\G_0)+\cf\cdot \delta T + C\sum_{t=1}^Td_{\textnormal{FGW};q,\beta}^q(\H_{t-1},\H_t)
        \end{aligned}
    \end{equation*}
\end{proof}

\subsection{Proof for Optimal Path}\label{app:proof-path}
\path*
\begin{proof}
    Note that for any intermediate graphs $\H_1,...,\H_{T-1}$, by Jensen's inequality of the convex function $z\to|z|^q$ and the triangle inequality of $\fgw$, we have:
    \begin{equation*}
        \begin{aligned}
            \sum_{t=1}^Td_{\textnormal{FGW};q,\beta}^q(\H_{t-1},\H_t)={}&T\sum_{t=1}^T\frac{d_{\textnormal{FGW};q,\beta}^q(\H_{t-1},\H_t)}{T}\\
            \geq{} &T\sum_{t=1}^T\left(\frac{d_{\textnormal{FGW};q,\beta}(\H_{t-1},\H_t)}{T}\right)^q\\
            ={}&\frac{\sum_{t=1}^Td_{\textnormal{FGW};q,\beta}^q(\H_{t-1},\H_t)}{T^{q-1}}\\
            \geq{} &\frac{d_{\textnormal{FGW};q,\beta}^q(\H_{t-1},\H_t)}{T^{q-1}}
        \end{aligned}
    \end{equation*}
    When the intermediate graphs $\H_t,\forall t=1,2,...,T$ are on the FGW geodesics, i.e., $\H_t=\gamma\left(\frac{t}{T}\right)$, by the geodesic property in Definition~\ref{def:fgw_geo}, we have
    \begin{equation*}
        \begin{aligned}
            d_{\textnormal{FGW};q,\beta}(\H_{t-1},\H_t)={}&d_{\textnormal{FGW};q,\beta}\left(\gamma\left(\frac{t-1}{T}\right),\gamma\left(\frac{t}{T}\right)\right)\\
            ={}&\left|\frac{t-1}{T}-\frac{t}{T}\right|\cdot d_{\textnormal{FGW};q,\beta}(\gamma(0),\gamma(1))\\
            ={}&\frac{1}{T}\cdot d_{\textnormal{FGW};q,\beta}(\G_0,\G_1)
        \end{aligned}
    \end{equation*}
    Therefore, we have
    \begin{equation*}
        \begin{aligned}
            \xi(f_T,\G_1)\leq{}& \xi(f_0,\G_0)+\cfn\cdot \delta T + C\sum_{t=1}^Td_{\textnormal{FGW};q,\beta}^q(\H_{t-1},\H_t)\\
            ={}&\xi(f_0,\G_0)+\cfn\cdot \delta T + C\sum_{t=1}^T\left(\frac{1}{T}d_{\textnormal{FGW};q,\beta}(\G_0,\G_1)\right)^q\\
            ={}&\xi(f_0,\G_0)+\cfn\cdot\delta T + \frac{C\cdot d_{\textnormal{FGW};q,\beta}^q(\G_0,\G_1)}{T^{q-1}}
        \end{aligned}
    \end{equation*}
    which realize the lower bound. Therefore, the geodesic $\gamma$ gives the optimal path for graph GDA.
\end{proof}

\geodesic*
\begin{proof}
    Given a source graph $\G_0=(\V_0,\A_0,\X_0)$ and a target graph $\G_1=(\V_1,\A_1,\X_1)$, as well as their probability measures $\bm{\mu}_1,\bm{\mu}_2$, we obtain the optimal FGW matching $\bm{S}$ based on \eqref{eq:fgwd1}.

    We first show that the transformed graphs $\TG_0,\TG_1$ from \eqref{eq:transform} are in the FGW equivalent classes of $\G_0,\G_1$, respectively.
    The transformed graphs are on the product space of $\G_0$ and $\G_1$, and we can write out the FGW distance between $\G_0$ and $\TG_0$ as follows
    \begin{equation*}
        \begin{aligned}
            &\fgw(\G_0,\TG_0)\\
            &=\min_{\bm{S}\in\Pi(\bm{\mu}_1,\tilde{\bm{\mu}}_1)}(1-\alpha)\mathbb{E}_{(u,(x,y))\sim \bm{S}}\bm{M}(u,(x,y))^q \ + \alpha \mathbb{E}_{(u,(x,y))\sim \bm{S}\atop (u',(x',y'))\sim \bm{S}}|\bm{A}_0(u,u')-\tilde{\bm{A}}_0((x,y),(x',y'))|^q 
        \end{aligned}
    \end{equation*}
    Consider a the following naive coupling satisfying the marginal constraint $\bm{S}\in\Pi(\bm{\mu}_0,\tilde{\bm{\mu}}_0)$
    \begin{equation}\label{eq:naive_s}
        \bm{S}(u,(x,y))=\left\{
        \begin{aligned}
            &\frac{\bm{\mu}_0(u)}{|\V_1|}, ~ \text{if } u=x\\
            &0, ~\text{else}
        \end{aligned},
        \right.
    \end{equation}
    the FGW distance $\fgw(\G_0,\TG_0)$ with optimal coupling $\bm{S}^*$ is upper bounded by $\varepsilon_{\G_0,\TG_0}(\bm{S}_0)$ as follows
    \begin{equation*}
    \small
        \begin{aligned}
            &\fgw^q(\G_0,\TG_0)\\
            &=(1-\alpha)\mathbb{E}_{(u,(x,y))\sim \bm{S}^*}\bm{M}(u,(x,y)) \ + \alpha \mathbb{E}_{(u,(x,y))\sim \bm{S}^*\atop (u',(x',y'))\sim \bm{S}^*}|\bm{A}_0(u,u')-\tilde{\bm{A}}_0((x,y),(x',y'))|^q\\
            &=(1\!-\!\alpha)\mathbb{E}_{(u,(x,y))\sim \bm{S}^*}\left|\X(u)\!-\!\!\sum_{i\in\V_0}\!\bm{P}_0(i,\!(x,y))\X(i)\right|^q + \alpha \mathbb{E}_{(u,(x,y))\sim \bm{S}^*\atop (u',(x',y'))\sim \bm{S}^*}\left|\bm{A}_0(u,u')\!-\!\!\sum_{i\in\V_0\atop j\in\V_1}\!\bm{P}_0(i,(x,y))\bm{A}_0(i,j)\bm{P}_0(j,(x',y'))\right|^q\\
            &=(1-\alpha)\mathbb{E}_{(u,(x,y))\sim \bm{S}^*}|\X(u)-\X(x)|^q \ + \alpha \mathbb{E}_{(u,(x,y))\sim \bm{S}^*\atop (u',(x',y'))\sim \bm{S}^*}|\bm{A}_0(u,u')-\bm{A}_0(x,x')|^q ~~ \text{(\eqref{eq:transform})}\\
            &\leq (1-\alpha)\mathbb{E}_{(u,(x,y))\sim \bm{S}_0}|\X(u)-\X(u)|^q \ + \alpha \mathbb{E}_{(u,(x,y))\sim \bm{S}_0\atop (u',(x',y'))\sim \bm{S}_0} |\bm{A}_0(u,u')-\bm{A}_0(u,u')|^q ~~ \text{(\eqref{eq:naive_s})}\\
            &=0
        \end{aligned}
    \end{equation*}
    Due to the non-negativity property of the FGW distance~\cite{vayer2020fused}, we prove that $\fgw(\G_0,\TG_0)=0$, i.e., $\G_0\sim\TG_0$. Similarly, we can show that $\G_1\sim\TG_1$.
    
    Afterwards, we prove that the interpolation in \eqref{eq:transform} and \eqref{eq:geodesic} generate intermediate graphs on the FGW geodesics.
    According to~\cite{vayer2020fused}, the FGW geodesics connecting $\G_0$ and $\G_1$ is a graph in the product space $\G=(\V_0\otimes\V_1,\tilde{\A},\tilde{\X})$ satisfying the following property:
    \begin{equation}\label{eq:fgw_geodesic}
        \begin{aligned}
            &\TG_{\tT}=(\tilde{\V}_{\tT},\tilde{\A}_{\tT},\tilde{\X}_{\tT})\\
            &\text{where }\left\{
            \begin{aligned}
                &\tilde{\V}_{\tT}=\V_0\otimes\V_1\\
                &\tilde{\A}_{\tT}((u,v),(u',v'))\!=\!\left(1\!-\!\tT\right)\A_0(u,u') +\tT\A_1(v,v'),\forall u,u'\in\V_0, v,v'\in\V_1\\
                &\tilde{\X}_{\tT}((u,v))=\left(1-\tT\right)\X_0(u)+\tT\X_1(v), \forall u\in\V_0,v\in\V_1
            \end{aligned}
            \right.
        \end{aligned}
    \end{equation}
    Following the transformation in \eqref{eq:transform}, for nodes $u,u'\in\V_0$ and $v,v'\in\V_1$, we can rewrite the transformed adjacency matrix $\TA_0$ and attribute matrix $\TX_0$ as follows
    \begin{equation*}
        \begin{aligned}
            &\TA_0((u,v),(u',v'))=\sum_{i\in\V_0,j\in\V_1}\bm{P}_0(i,(u,v))\bm{A}_0(i,j)\bm{P}_1(j,(u',v'))=\A_0(u,u')\\
            &\TX_0((u,v))=\sum_{i\in\V_0}\bm{P}_0(i,(u,v))\bm{X}_0(i)=\bm{X}_0(u)\\
        \end{aligned}
    \end{equation*}
    Therefore, the intermediate graph $\H_t$ in \eqref{eq:geodesic} can be expresserd by:
    \begin{equation*}
        \begin{aligned}
            &\H_t=(\V_\tT,\TA_\tT,\TX_\tT)\\
            &\text{where }\left\{
            \begin{aligned}
                &\V_\tT=\V_0\otimes\V_1\\
                &\TA_\tT((u,v),(u',v'))=\left(1-\tT\right)\A_0(u,u') + \tT\A_1(v,v')\\
                &\TX_{\tT}((u,v))=\left(1-\tT\right)\X_0(u)+\tT\X_1(v)
            \end{aligned}
            \right.
        \end{aligned}
    \end{equation*}
    Now, we consider a naive "diagonal" coupling $\pi_{t_1,t_2}$ between $\H_{t_1},\H_{t_2}$ as follows
    \begin{equation*}
        \pi_{t_1,t_2}((u,v),(u',v')) = \left\{
        \begin{aligned}
            &\pi(u,v), \text{ if } u=u' \text{ and } v=v'\\
            &0, \text{ else}
        \end{aligned}
        \right.
    \end{equation*}
    Afterwards, the FGW distance between $\H_{t_1}$ and $\H_{t_2}$ should be less or equal to the FGW distance under the 'diagonal' coupling, that is:
    \begin{equation*}
        \begin{aligned}
            &\fgw(\H_{t_1},\H_{t_2})\\
            &\leq \sum_{u,v,u',v'}\left[(1-\alpha)|\TX_{\frac{t_1}{T}}(u)-\TX_{\frac{t_2}{T}}(u')|+\alpha\cdot |\TA_{\frac{t_1}{T}}(u,v)-\TA_{\frac{t_2}{T}}(u',v')|\right]\pi_{t_1,t_2}((u,v),(u',v'))\\
            &=\sum_{u,v}\left[(1-\alpha)|\TX_{\frac{t_1}{T}}(u)-\TX_{\frac{t_2}{T}}(u)|+\alpha\cdot |\TA_{\frac{t_1}{T}}(u,v)-\TA_{\frac{t_2}{T}}(u,v)|\right]\pi(u,v)
        \end{aligned}
    \end{equation*}
    
    According to \eqref{eq:fgw_geodesic}, we have
    \begin{equation*}
        \small
        \begin{aligned}
            &\left|\TX_{\frac{t_1}{T}}(u)\!-\!\TX_{\frac{t_2}{T}}(u)\right|=\left|(1\!-\!\frac{t_1}{T})\X_0(u)+\frac{t_1}{T} \X_1(u)-(1\!-\!\frac{t_2}{T})\X_2(u)-\frac{t_2}{T} \X_2(u)\right|=\left|\frac{t_1-t_2}{T}\right|\cdot|\X_0(u)\!-\!\X_1(u)|\\
            &\left|\A_{\frac{t_1}{T}}(u,v)\!-\!\A_{\frac{t_2}{T}}(u,v)\right|=\left|(1\!-\!\frac{t_1}{T})\A_0(u,v)+\frac{t_1}{T} \A_1(u,v)-(1\!-\!\frac{t_2}{T})\A_2(u,v)-\frac{t_2}{T} \A_2(u,v)\right|=\left|\frac{t_1-t_2}{T}\right|\cdot|\A_0(u,v)\!-\!\A_1(u,v)|
        \end{aligned}
    \end{equation*}
    
    Combine the above two equations, we have
    \begin{equation}\label{eq:ineq_1}
        \begin{aligned}
            &\fgw(\H_{t_1},\H_{t_2})\\
            &\leq\left|\frac{t_1-t_2}{T}\right|\cdot\sum_{u,v}[(1-\alpha)|\X_0(u)-\X_1(u)|+\alpha\cdot |\A_0(u,v)-\A_1(u,v)|]\pi(u,v)\\
            &=\left|\frac{t_1-t_2}{T}\right|\fgw(G_0,G_1)
        \end{aligned}
    \end{equation}

    The above inequality holds for any $0\le\frac{t_1}{T}\le\frac{t_2}{T}\le1$. In particular, we have 
    \begin{equation*}
        \begin{aligned}
            &\fgw(\G_0,\H_{t_1})\le\left|0-\frac{t_1}{T}\right|\fgw(\G_0,\G_1)=\frac{t_1}{T}\fgw(\G_0,\G_1)\\
            &\fgw(\H_{t_1},\H_{t_2})\le\left|\frac{t_1}{T}-\frac{t_2}{T}\right|\fgw(\G_0,\G_1)=\frac{t_2-t_1}{T}\fgw(\G_0,\G_1)\\
            &\fgw(\H_{t_2},\G_1)\le\left|\frac{t_2}{T}-1\right|\fgw(\G_0,\G_1)=(1-\frac{t_2}{T})\fgw(\G_0,\G_1)
        \end{aligned}
    \end{equation*}
    
    Finally, by the triangle inequality of FGW distance~\cite{vayer2020fused}, we have
    \begin{equation*}
        \begin{aligned}
            \fgw(\G_0,\G_1)&\le\frac{t_1}{T}\fgw(\G_0,\G_1)+\frac{t_2-t_1}{T}\fgw(\G_0,\G_1)+(1-\frac{t_2}{T})\fgw(\G_0,\G_1)=\fgw(\G_0,\G_1)
        \end{aligned}
    \end{equation*}
    
    Hence, the $\le$ in this inequality is actually $=$; in particular, $\fgw(\H_{t_1},\H_{t_2})=|\frac{t_1}{T}-\frac{t_2}{T}|\fgw(\G_0,\G_1)$. Therefore, we prove that the intermediate graphs $\H_t$ generated by \eqref{eq:geodesic} are on the FGW geodesics connecting $\G_0$ and $\G_1$.
\end{proof}

\subsection{Proof for practical error bound}
\pracbound*

\begin{proof}
    Let $\H$ be the graph on the exact geodesic and $\tilde{\H}$ be the graph on the approximate path with rank-$r$. We use $\P$ to denote the optimal coupling and $\tilde{\P}$ to denote the approximated low-rank coupling. We denote $\Delta_{\P}=\|\P-\tilde{\P}\|_F$. We denote $\delta_{\text{approx}} =\max_t \fgw(\H_t, \tilde{\H}_t)$ as the maximum approximation error between the exact geodesic graph $\H_t$ (full-rank) and approximated geodesic graph $\tilde{\H}_t$.

    First, we bound the approximation gap $\delta_{\text{approx}}$. When considering a naive identity coupling (i.e., the $i$-th node in $\H_t$ align with the $i$-th node in $\tilde{\H}_{t}$) between $\H_t$ and $\tilde{\H}_t$, it induces an upper bound of the FGW distance as
    \begin{equation*}
        \fgw^2(\H_t,\tilde{\H}_t)\leq (1-\alpha)\|\X_t-\tilde{\X}_t\|_F + \alpha\|\A_t-\tilde{\A}_t\|_F
    \end{equation*}
    According to the transformation in Eq.~\ref{eq:geodesic}, feature error at step $t$ can be written as
    \begin{equation*}
        \|\X_t-\tilde{\X}_t\|_F=\|(1-\frac{t}{T})(\P_0^\T-\tilde{\P}_0^\T)\X_0 + \frac{t}{T}(\P_1^\T-\tilde{\P}_1^\T)\X_1\|_F \leq (1-\frac{t}{T})\Delta_{\P_0}\|\X_0\|_F + \frac{t}{T}\Delta_{\P_1}\|\X_1\|_F
    \end{equation*}
    Similarly, the structure error at step $t$ can be written as 
    \begin{equation*}
        \begin{aligned}
            \|\A_t-\tilde{\A}_t\|_F&=\|(1-\frac{t}{T})(\P_0^\T\A_0\P_0-\tilde{\P}_0^\T\A_0\tilde{\P}_0) + \frac{t}{T}(\P_1^\T\A_1\P_1-\tilde{\P}_1^\T\A_1\tilde{\P}_1)\|_F\\
            &\leq (1-\frac{t}{T})\|\P_0^\T\A_0(\P_0-\tilde{\P}_0) + (\P_0-\tilde{\P}_0)\A_0\tilde{\P}_0\|_F + \frac{t}{T}\|\P_1^\T\A_1(\P_1-\tilde{\P}_1) + (\P_1-\tilde{\P}_1)\A_1\tilde{\P}_1\|_F\\
            &\leq (1-\frac{t}{T})\Delta_{\P_0}\|\A_0\|_F(\|\P_0\|_F+\|\tilde{\P}_0\|_F)+\frac{t}{T}\Delta_{\P_1}\|\A_1\|_F(\|\P_1\|_F+\|\tilde{\P}_1\|_F)
        \end{aligned}
    \end{equation*}
    Therefore, the approximation error can be bounded by
    \begin{equation*}
        \fgw^2(\H_t,\tilde{\H}_{t})\leq (1-\frac{t}{T})\Delta_{\P_0}((1-\alpha)\|\X_0\|_F+\alpha\|\A_0\|_F(\|\P_0\|_F+\|\tilde{\P}_0\|_F)) + \frac{t}{T}\Delta_{\P_1}((1-\alpha)\|\X_1\|_F+\alpha\|\A_1\|_F(\|\P_1\|_F+\|\tilde{\P}_1\|_F))
    \end{equation*}
    
    Afterwards, we analyze the impact of the approximation error on the error bound.
    We first apply the triangle inequality on the FGW distance as
    \begin{equation*}
        \fgw(\tilde{\H}_t,\tilde{\H}_{t+1}) \leq \fgw(\tilde{\H}_t,\H_t) + \fgw(\H_t,\H_{t+1}) + \fgw(\H_{t+1},\tilde{\H}_{t+1})
    \end{equation*}    
    Therefore, the error bound in Theorem~\ref{thm:error} can be written as
    \begin{equation*}
        \xi_(f_T,G_0)\leq \text{Original bound} + 4C\delta_{\text{approx}}\sum_{t=1}^T\fgw(\H_t,\H_{t+1}) + 4CT\delta_{\text{approx}}^2
    \end{equation*}
    
    Note that when $r\to |\V_1||\V_2|$, i.e., full rank, we have $\Delta_{\P_i}\to 0$, i.e., $\|\A_t-\tilde{\A}_t\|_F\to 0, \|\X_t-\tilde{\X}_t\|_F\to 0$. Therefore, we have $\delta_{\text{approx}}=max_t \fgw(\H_t,\tilde{\H}_t)\to 0$ where approximation errors are eliminated.
\end{proof}

\newpage
\section{Algorithm}\label{app:algo}
We first provide the detailed algorithm of the proposed \algname\ in Algorithm~\ref{algo:gadget}, which generates the path for graph GDA.
\begin{algorithm}[h]
    \caption{\algname}\label{algo:gadget}
    \begin{algorithmic}[1]
    \STATE {\bfseries Input} source graph $\G_0=(\V_0,\A_0,\X_0)$, target graph $\G_1=(\V_1,\A_1,\X_1)$, number of stages $T$, marginals $\bm{\mu}_0,\bm{\mu}_1$, rank $r$, step size $\gamma$, lower bound $\alpha$, error threshold $\delta$.
    \STATE Initialize transformation matrices $\bm{g}^{(0)}\in\Delta_r,\Q_0^{(0)}\in\Pi(\bm{\mu}_0,\bm{g}),\Q_1^{(0)}\in\Pi(\bm{\mu}_1,\bm{g})$;
    \STATE Compute attribute distance matrix $\bm{M}(u,v)=\|\X_0(u)-\X_1(v)\|_2,\forall u\in\V_0,v\in\V_1$;
    \FOR{$t=0,1,...$}
        \STATE $\bm{B}^{(t)} = -\alpha\bm{M} \!+\! 4(1-\alpha)\A_0\Q_0^{(t)}\!\text{diag}(1/\bm{g}^{(t)})\Q_1^{(t)^\T}\A_1$;
        \STATE $\bm{\xi}_1=\exp\!\left(\gamma\bm{B}^{(t)}\Q_1^{(t)}\text{diag}(1/\bm{g}^{(t)})\right)\odot\Q_0^{(t)}$;
        \STATE $\bm{\xi}_2=\exp\!\left(\gamma\bm{B}^{(t)^\T}\Q_0^{(t)}\text{diag}(1/\bm{g}^{(t)})\right)\odot\Q_1^{(t)^\T}$;
        \STATE $\bm{\xi}_3=\exp\!\left(-\gamma \text{diag}\!\left(\!\bm{Q}_0^{(t)^\T}\bm{B}^{(t)}\Q_1^{(t)}\right)\!/\bm{g}^{(t)^2}\right)\odot\bm{g}^{(t)}$;
        \STATE $\Q_0^{(t+1)},\Q_1^{(t+1)},\bm{g}^{(t+1)}=\text{LR-Dykstra}(\bm{\xi}_1,\bm{\xi}_2,\bm{\xi}_3,\bm{\mu}_0,\bm{\mu}_1,\alpha,\delta)$~\cite{scetbon2021low};
    \ENDFOR
    \STATE Normalize transformation matrices $\P_0=\Q_0\text{diag}(1/\bm{g}), \P_1=\Q_1\text{diag}(1/\bm{g})$;
    \STATE Compute transformed adjacency matrices $\tilde{\A}_0 = \bm{P}_0^\T\A_0\bm{P}_0, \tilde{\A}_1 = \bm{P}_1^\T\A_1\bm{P}_1$;
    \STATE Compute transformed attribute matrices $\tilde{\X}_0 = \bm{P}_0^\T\X_0, \tilde{\X}_1 = \bm{P}_1^\T\X_1$;
    \STATE Compute transformed marginals $\tilde{\bm{\mu}}_0=\bm{P}_0^\T\bm{\mu}_0, \tilde{\bm{\mu}}_1=\bm{P}_1^\T\bm{\mu}_1$;
    \STATE Generate intermediate graphs $\H_t\!:=\!\left(\!\V_0\otimes\V_1,\left(\!1\!-\!\tT\!\right)\!\TA_0\!+\!\tT\TA_1,\left(\!1\!-\!\tT\!\right)\!\TX_0\!+\!\tT\TX_1\!\right),\forall t=1,2,...,T-1$;
    \STATE \textbf{return} path $\H=(\H_0,\H_1,...,\H_T)$.
    \end{algorithmic}
\end{algorithm}

After obtaining the path by Algorithm~\ref{algo:gadget}, we can perform self-training along the path for GDA. The detailed algorithm is provided in Algorithm~\ref{algo:gda}.

\begin{algorithm}[h]
    \caption{Graph gradual domain adaptation}\label{algo:gda}
    \begin{algorithmic}[1]
    \STATE {\bfseries Input} source graph $\G_0=(\V_0,\A_0,\X_0)$, source node label $\bm{Y}_0$, target graph $\G_1=(\V_1,\A_1,\X_1)$, number of stages $T$;
    \STATE Generate path $\H=(\H_0,\H_1,...\H_T)$ for graph GDA by \algname\ in Algorithm~\ref{algo:gadget}
    \STATE Set initial confidence score $\text{conf}_0=\text{Unif}(|\V_0|)$
    \FOR{$t=0,1,...,T-1$}
        \STATE Train and adapt GNN model $f_t$ by $\mathop{\arg\min}_{f_\theta} \ell(\H_t, \H_{t+1}, \bm{Y}_{t+1}, \text{conf}_t)$;
        \STATE Obtain pseudo-labels by $\bm{Y}_{t+1}=f_t(\H_{t+1})$;
        \STATE Compute confidence score $\text{conf}_{t+1}$ on $\H_{t+1}$ by Eq.~\eqref{eq:conf};
    \ENDFOR
    \STATE \textbf{return} target GNN model $f_T$.
    \end{algorithmic}
\end{algorithm}

\section{Additional Experiments}\label{app:exp}
We provide additional experiments and analysis to better understand the proposed \algname.

\subsection{Experiment Result Statistics}\label{app:exp_stat}
We provide more statistics on the benchmark results in Figure~\ref{fig:exp_bench}, and the statistics are shown in Table~\ref{tab:app_bench_stat}.
We report the Average, Maximum and Minimum improvement of \algname\ on direct adaptation with different DA methods and backbone GNNs.
We also report the percentage of cases where \algname\ outperforms (Positive) or underperforms (Negative) direct adaptation. 
It is shown that \algname\ achieves positive average improvement on all datasets, with impressive maximum improvements of at least 9.83\%.
For cases where \algname\ fails, it still achieves comparable results with at most 2.51\% degradation.
However, as the columns Positive and Negative show, \algname\ outperforms direct DA in over 90\% cases, with only less than 9\% cases with negative transfer.

\begin{table}[t]
\centering
\caption{Statistics on experiment results. All number are reported in percentage (\%).}
\label{tab:app_bench_stat}
\begin{tabular}{@{}llllll@{}}
\toprule
Dataset  & Average & Max   & Min   & Positive & Negative \\ \midrule
Airport  & 6.77    & 26.30 & -1.75 & 94.40  & 5.56   \\
Social   & 3.58    & 15.00 & -2.51 & 91.57  & 8.33   \\
Citation & 3.43    & 9.83  & -1.81 & 91.57  & 8.33   \\
CSBM     & 36.51   & 48.00 & 16.67 & 100.0  & 0.00   \\ \bottomrule
\end{tabular}
\vspace{-15pt}
\end{table}

\subsection{Computation Complexity Analysis}
\begin{wrapfigure}{r}{0.4\textwidth}
    \vspace{-15pt}
    \includegraphics[width=0.95\linewidth]{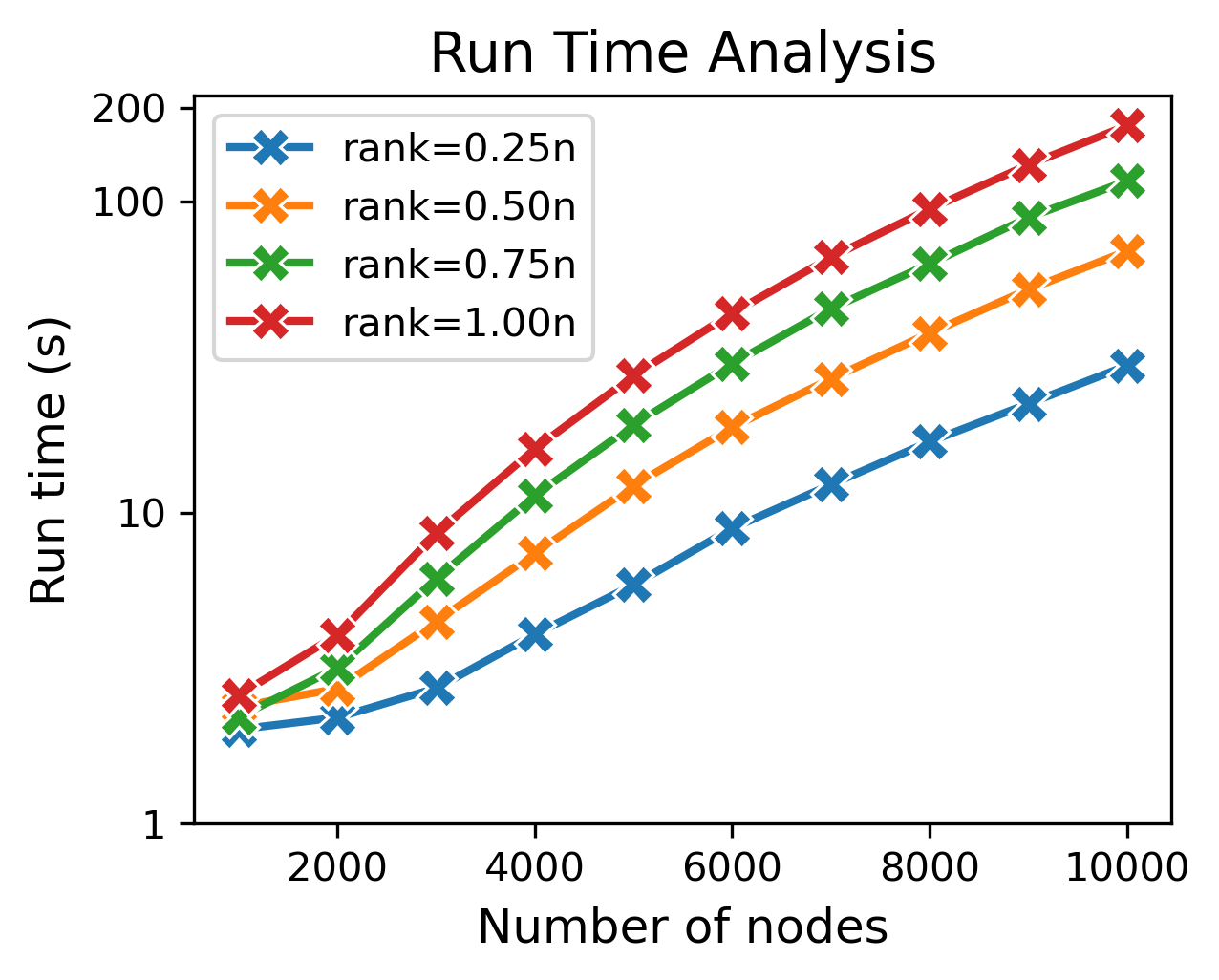}
    \vspace{-10pt}
    \caption{Run time analysis w.r.t. graph size. The y-axis is in the log scale.}
    \label{fig:app_run}
\end{wrapfigure}
We analyze the time complexity of \algname.
Suppose we have source and target graphs with $\mathcal{O}(n)$ nodes, node feature dimension of $d$, and low-rank OT rank of $r$. The time complexity for path generation is $\mathcal{O}(Lndr+Ln^2r)$, where $L$ is the number of iterations in the low-rank OT algorithm in Algorithm~\ref{algo:gadget}. 
Besides, as gradual GDA involves repeated training along the path, an additional $\mathcal{O}(Tt_{\text{train}})$ complexity is needed, where $\mathcal{O}(t_{\text{train}})$ is the time complexity for training a GNN model.
Therefore, the overall training complexity is $\mathcal{O}(Lndr+Ln^2r+Tt_{\text{train}})$, which is linear w.r.t. the feature dimension $d$ and the number of steps $T$, and quadratic w.r.t. the number of nodes $n$.

We also carry out experiments to analyze the run time w.r.t. the number of nodes $n$ with different ranks $r$, and the result is shown in Figure~\ref{fig:app_run}.
It is shown that \algname\ scales relatively well w.r.t. the number of nodes, exhibiting a sublinear scaling of log(time) w.r.t. the number of nodes.
Moreover, the computation can be further accelerated by reducing the rank. When $r$ is reduced from full-rank ($1.00n$) to low-rank ($0.25n$), the run time can be reduced from 175 seconds to 30 seconds on graphs with 10,000 nodes.

\subsection{Intermediate graphs.} We provide visualization results to understand the proposed graph GDA process, where the intermediate graphs between a 3-block graph and 2-block graph are shown in Figure~\ref{fig:app_visual_mixup}.
We observe a smooth transition from 3-block graph to 2-block graph with small shifts between two consecutive graphs.
\begin{figure}[h]
    \centering
    \includegraphics[width=\linewidth, trim = 180 30 150 30, clip]{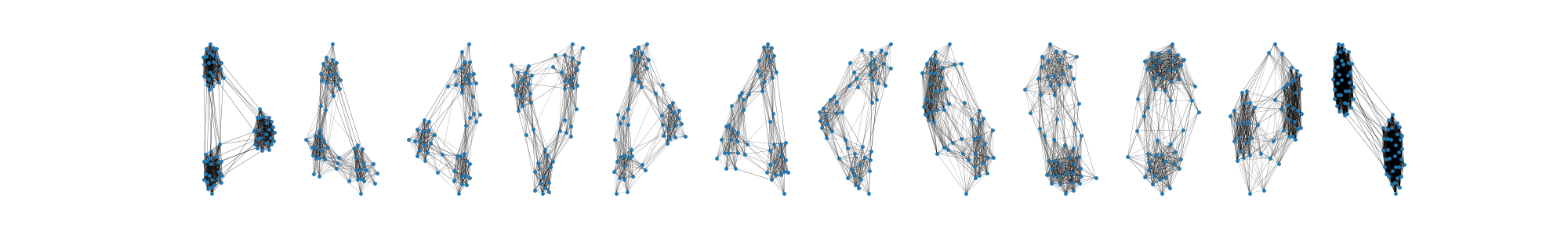}
    \vspace{-10pt}
    \caption{Visualization of the intermediate graphs.}
    \label{fig:app_visual_mixup}
\end{figure}

\subsection{Pseudo-label confidence}
To understand how entropy-based confidence facilitates self-training, we visualize the embedding spaces learned with and without entropy-based confidence, and the results are shown in Figure~\ref{fig:app_pl}.
It is shown that noisy pseudo-labels near the decision boundary are assigned with lower confidence, contributing less to self-training.
In addition, we observe that the embedding space trained with confidence better separates different classes in the target domain, hence achieving better performance.
Besides, we also quantitatively evaluate the universal benefits of entropy-based confidence by generating the intermediate graphs via different graph mixup methods~\cite{han2022g,ma2024fused}.

\begin{figure}[t]
\centering
\hfill
\subfigure[embedding w/o confidence]{\includegraphics[width = .25\linewidth,trim = 0 0 0 0,clip]{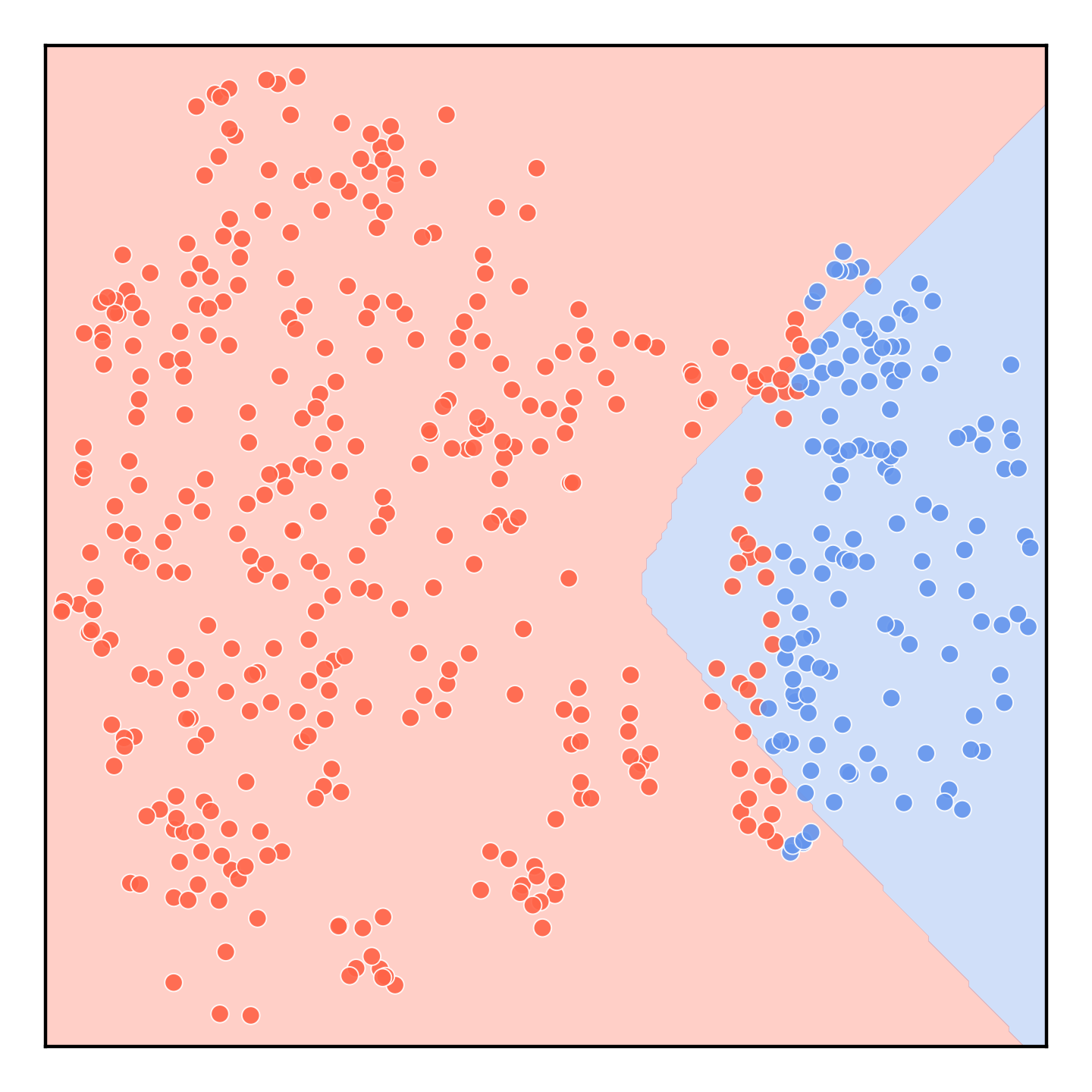}\label{fig:pl1}}
\vspace{20pt}
\subfigure[embedding w/ confidence]{\includegraphics[width = .25\linewidth,trim = 0 0 0 0,clip]{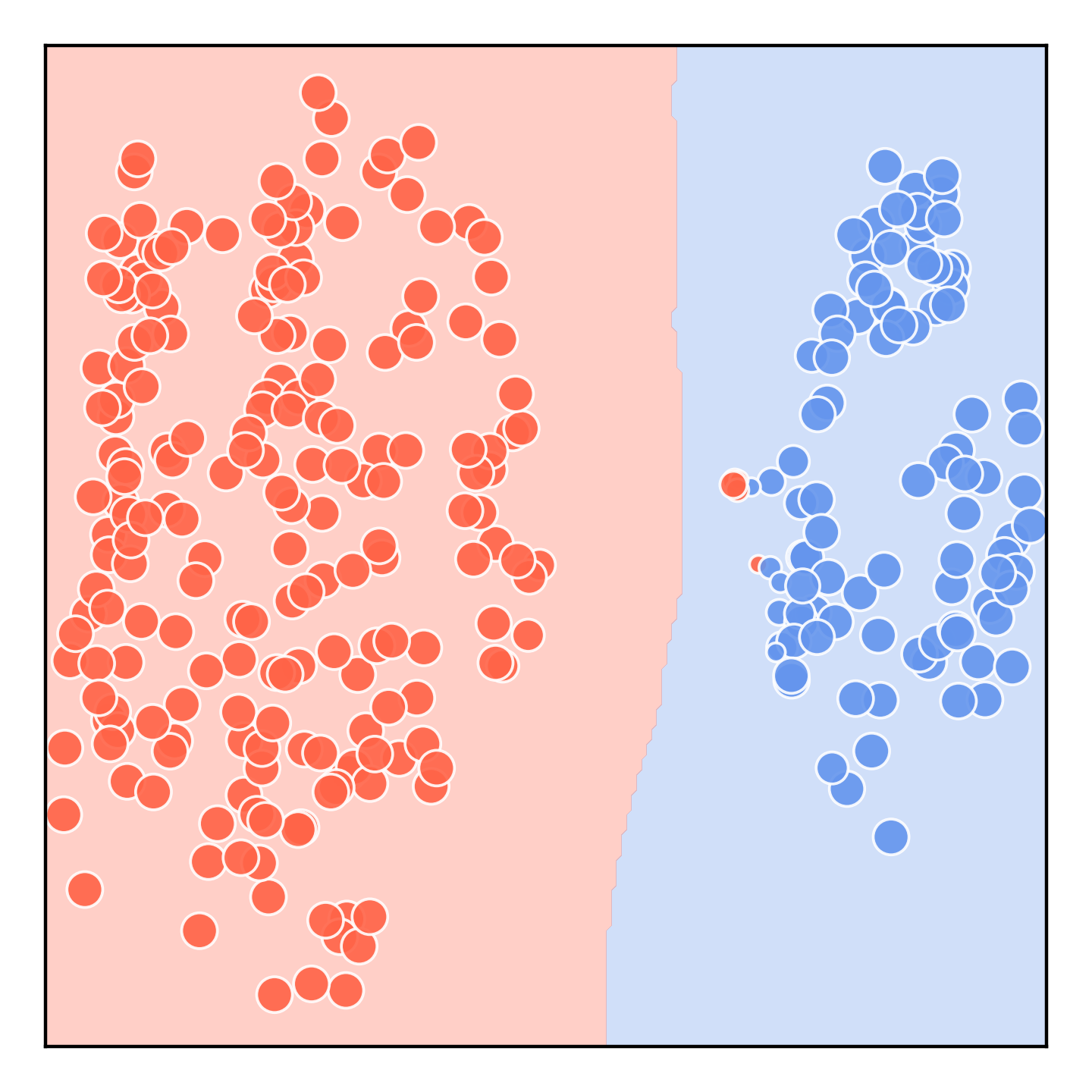}\label{fig:pl2}}
\vspace{20pt}
\subfigure[performance comparison]{\includegraphics[width = .25\linewidth,trim = 0 0 0 0,clip]{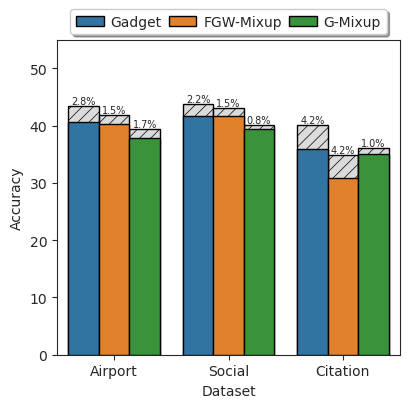}\label{fig:pl3}}
\hfill
\vspace{-15pt}
\caption{Evaluation on pseudo-label quality. Larger marker size indicate more confident pseudo-label. (a) Embedding space w/o confidence; (b) Embedding space w/ confidence. (c) Performance comparison: we evaluate graph GDA guided by different paths w/ (hatched bars) and w/o (colored bars) confidence scores.}
\label{fig:app_pl}
\end{figure}

\section{Reproducibility}\label{app:repro}
\subsection{Datasets}
We first introduce the datasets used in this paper, including three real-world datasets and three synthetic CSBM datasets, and the datasets statistics are provided in Table~\ref{tab:app_stat}.
For real-world datasets, we give a brief introduction as follows
\begin{itemize}
    \item \textbf{Airport~\cite{ribeiro2017struc2vec}} is a set of airport traffic networks, each of which is an unweighted, undirected network with nodes as airports and edges indicate the existence of commercial flights. Node labels indicate the level of activity of the corresponding airport. We use degree-bucking to generate one-hot node feature embeddings. The dataset includes three airports from \textit{USA}, \textit{Europe} and \textit{Brazil}.
    \item \textbf{Citation~\cite{tang2008arnetminer}} is a set of co-authorship networks, where nodes represent authors and an edge exists between two authors if they co-authored at least one publication. Node labels indicate the research domain of the author, including "Database", "Data mining", "Artificial intelligence", "Computer vision", "Information security" and "High performance computing". Node features are extracted from the paper content. The dataset includes two co-author networks from \textit{ACM} and \textit{DBLP}.
    \item \textbf{Social~\cite{li2015unsupervised}} is a set of blog networks, where nodes represent bloggers and edges represent friendship. Node labels indicate the joining groups of the bloggers. Node features are extracted from blogger’s self-description. The dataset includes two blog networks from BlogCatalog (\textit{Blog1}) and  Flickr (\textit{Blog2}).
\end{itemize}

For synthetic datasets, we generate them based on the contextual block stochastic model (CSBM)~\cite{deshpande2018contextual}.
In general, we consider a CSBM with two classes $\mathcal{C}_+ = \{v_i : y_i = +1\}$ and $\mathcal{C}_- = \{v_i : y_i = -1\}$, each with $\frac{N}{2}$ nodes.
For a node $v_i$, the node attribute is independently sampled from a Gaussian distribution $\bm{x}_i \sim \mathcal{N}(\bm{\mu}_i, \bm{I})$.
For nodes from class $\mathcal{C}_+$, we have $\bm{\mu}_i = \bm{\mu}_+$; and for nodes from class $\mathcal{C}_-$, we have $\bm{\mu}_i = \bm{\mu}_-$.
Each pair of nodes are connected with probability $p$ if they are from the same class, otherwise $q$.
By varying the value of $\bm{\mu}_+,\bm{\mu}_-$, we can generate graphs with feature shifts.
By varying the value of $p,q$, we can generate graphs with homophily shifts with homophily score as $h = \frac{p}{p + q}$, and degree shifts with average degree as $d = \frac{N(p + q)}{2}$.
We provide more detailed description of generating the CSBM graphs as follows
\begin{itemize}
    \item \texttt{CSBM-Attribute} is a set of CSBM graphs with attribute shifts. We generate two graphs with attributes shifted left (namely \textit{Left}) and right (namely \textit{Right}). We set the number of nodes as 500, homophily score as $h=0.5$, average degree as 40, and feature dimension as 64. For node attributes, we set the $\bm{\mu}_+=0.6,\bm{\mu}_-=-0.4$ for \textit{Right}, and $\bm{\mu}_+=0.4,\bm{\mu}_-=-0.6$ for \textit{Left}.
    \item \texttt{CSBM-Degree} is a set of CSBM graphs with degree shifts. We generate two graphs with degree shifted high (namely \textit{High}) and low (namely \textit{Low}). We set the number of nodes as 500, homophily score as $h=0.5$, feature dimension as 64, and features with $\bm{\mu}_+=0.5,\bm{\mu}_-=-0.5$. For node degree, we set $d=80$ for $\textit{High}$ and $d=20$ for $\textit{Low}$.
    \item \texttt{CSBM-Homophily} is a set of CSBM graphs with homophily shifts. We generate two graphs with homophilic score (namely \textit{Homophily}) and heterophilic score (namely \textit{Heterophily}). We set the number of nodes as 500, average degree as 40, feature dimension as 64, and features with $\bm{\mu}_+=0.5,\bm{\mu}_-=-0.5$. For homophily score, we set the $h=0.8$ for \textit{Homophily}, and $h=0.2$ for \textit{Heterophily}.
\end{itemize}

\begin{table}[t]
    \centering
    \caption{Dataset statistics.}
    \label{tab:app_stat}
    \begin{tabular}{@{}l|l|rrrr@{}}
    \toprule
    Dataset                                        & Domain                      & \#node & \#edge & \#feat & \#class \\ \midrule
    \multirow{3}{*}{Airport}  & USA    & 1,190                       & 13,599                      & 64                              & 4                           \\
                              & Brazil & 131                         & 1,038                       & 64                              & 4                           \\
    \multicolumn{1}{l|}{}                          & \multicolumn{1}{l|}{Europe} & 399                         & 6,193                      & 64                              & 4                           \\ \midrule
    \multirow{2}{*}{Citation} & \multicolumn{1}{l|}{ACM}    & 7,410                       & 14,728                      & 7,537                           & 6                           \\
    \multicolumn{1}{l|}{}                          & \multicolumn{1}{l|}{DCLP}   & 5,995                       & 10,079                      & 7,537                           & 6                           \\ \midrule
    \multirow{2}{*}{Social}   & \multicolumn{1}{l|}{Blog1}  & 2,300                       & 34,621                      & 8,189                           & 6                           \\
    \multicolumn{1}{l|}{}                          & \multicolumn{1}{l|}{Blog2}  & 2,896                       & 55,284                     & 8,189                           & 6                           \\ \midrule
    \multirow{6}{*}{CSBM}     
                              & Left  & 500                         & 5,154                       & 64                              & 2                           \\
                              & Right & 500                         & 5,315                       & 64                              & 2                           \\
                              & Low       & 500                         & 2,673                       & 64                              & 2                           \\
                              & High      & 500                         & 10,302                       & 64                              & 2                           \\
                              
                              & Homophily        & 500                         & 5,154                       & 64                              & 2                           \\
                              & Heterophily      & 500                         & 5,163                       & 64                              & 2 \\ \bottomrule
    \end{tabular}
\end{table}

\subsection{Pipeline}
We focus on the unsupervised node classification task, where we have full access to the source graph, the source node labels, and the target graph during training.
Our main experiments include two parts, including direct adaptation and GDA using \algname.
For direct adaptation, we perform directly adapt the source graph to target graph.
For GDA, we first use \algname\ to generate intermediate graphs, then gradually adapt along the path.

For path generation, we set the number of intermediate graphs as $T=3$, and have all graphs uniformly distributed on the geodesic connecting source and target. We set $q=2$ and $\alpha=0.5$ for the FGW distance, and adopt uniform distributions $\text{Unif}(|\V_0|), \text{Unif}(|\V_1|)$ as the marginals.

For GNN models, we adopt light 2-layer GNNs with 8 hidden dimensions for smaller Airport and CSBM datasets, and heavier 3-layer GNNs with 16 hidden dimensions for larger Social and Citation datasets. We set the initial learning rate as $5\times 10^{-2}$ and train the model for 1,000 epochs.

We implement the proposed method in Python and all backbone models based on PyTorch.
For model training, all GNN models are trained on the Linux platform with an Intel Xeon Gold 6240R CPU and an
NVIDIA Tesla V100 SXM2 GPU.
We run all experiments for 5 times and report the average performance.
\section{More Related Works}\label{app:related}

\paragraph{Graph Domain Adaptation}
Graph DA transfers knowledge between graphs with different distributions and can be broadly categorized into data and model adaptation. Early graph DA methods drew inspiration from vision tasks by applying adversarial training to learn domain-invariant node embeddings~\cite{shen2020adversarial,dai2022graph}, analogous to DANN in images~\cite{ganin2016domain}.
\cite{wu2020unsupervised} introduced an unsupervised domain adaptive GCN that minimizes distribution discrepancy between graphs. Others exploit structural properties~\cite{wu2023non,guo2022learning}, such as degree distribution differences \cite{guo2022learning} and Subtree distance~\cite{wu2023non}.
A hierarchical structure is further proposed by~\cite{shi2023improving} to align graph structures hierarchically.
The rapid progress in this area has led to dedicated benchmarks~\cite{shi2023opengda} and surveys \cite{wu2024graph,shi2024graph}, consolidating GDA techniques. 
These studies consistently report that large distribution shifts between non-IID graph domains remain difficult to overcome, motivating novel solutions such as our OT-based geodesic approach for more effective cross-graph knowledge transfer.

\paragraph{Gradual Domain Adaptation}
Gradual domain adaptation (GDA) addresses scenarios of extreme domain shifts by introducing a sequence of intermediate domains that smoothly connect the source to the target.
Traditional methods in vision have instantiated the idea of GDA by generating intermediate feature spaces or image styles that interpolate between domains \cite{gong2019dlow,hsu2020progressive}. 
For instance, DLOW \cite{gong2019dlow} learns a domain flow to progressively morph source images toward target appearance, and progressive adaptation techniques have improved object detection across environments \cite{hsu2020progressive}. Recently, the theory of gradual adaptation has been formalized~\cite{kumar2020understanding,wang2022understanding,abnar2021gradual,chen2021gradual}, where the benefits of intermediate distributions and optimal path have been studied. 
\cite{he2023gradual} further provides generalization bounds proving the efficacy of gradual adaptation under certain conditions. On the algorithmic front, methods to construct or simulate intermediate domains have emerged. \cite{sagawa2022gradual} leverages normalizing flows to synthesize a continuum of distributions bridging source and target, while \cite{zhuang2024gradual} employs a Wasserstein gradient flow to gradually transport source samples toward the target distribution. 
This gradual paradigm has only just begun to be explored for graph data, e.g., recent work suggests that interpolating graph distributions can significantly improve cross-graph transfer when direct adaptation fails due to a large shift. 
By viewing domain shift as a trajectory in a suitable metric space, one can effectively guide the model through intermediate graph domains, which is precisely the principle our FGW geodesic strategy instantiates.

\paragraph{Graph Neural Networks}
Graph Neural Network (GNN) is a prominent approach for learning on graph-structured data, with wide applications in fields such as social network analysis~\cite{jing2024sterling,fu2021sdg,yan2024thegcn}, bioinformatics~\cite{fu2022dppin,xu2024discrete}, information retrieval~\cite{wei2020fast,li2024sphere,li2024apex,liu2024aim} and recommendation~\cite{liu2024collaborative,zeng2024interformer,zeng2025hierarchical,zeng2025harnessing,liang2025external,yoo2023ensuring}, and tasks like graph classification~\cite{xu2018powerful,lin2024made,zheng2024pyg}, node classification~\cite{yan2024reconciling,liu2023topological,lin2024bemap,xu2024slog,yan2023trainable}, link prediction~\cite{yan2022dissecting,yan2024pacer}, and time-series forecasting~\cite{lin2025cats,lin2024backtime,qiu2023reconstructing,wang2023networked}.
Foundational architectures such as GCN~\cite{kipf2017semi}, GraphSAGE~\cite{hamilton2017inductive}, and GAT~\cite{velickovic2017graph} introduced effective message-passing schemes to aggregate neighbor information, and subsequent variants have continuously pushed state-of-the-art performance.
However, distribution shift poses a serious challenge to GNNs in practice: models trained on a source graph often degrade when applied to a different graph whose properties deviate significantly. 
This lack of robustness to domain change has prompted research into both graph domain generalization and graph domain adaptation. 
On the generalization side, methods inject regularization or data augmentation to make GNNs invariant to distribution changes such as graph mixup~\cite{ma2024fused,zeng24geomix,zhou2024graph}.
On the adaptation side, numerous domain-adaptive GNN frameworks aim to transfer knowledge from a labeled source graph to an unlabeled target graph by aligning feature and structural representations~\cite{shen2020adversarial,dai2022graph,liu2023structural}. Despite these advances, adapting GNNs to out-of-distribution graphs remains non-trivial, especially under large shifts.
Besides, test-time adaptation on graphs has been studied recently~\cite{bao2024adarc,chen2022graphtta,jin2022empowering,zeng2026subspace} where the GNN model is adapted at test time without re-accessing the source graph.
However, existing graph DA methods implicitly assume a mild shift between the source and the target graph, while our work focuses on the more challenging setting where source and target graphs suffer from large shifts.

\paragraph{Optimal Transport on Graphs}
Optimal Transport (OT) provides a principled framework to compare and align distributions with geometric awareness, making it particularly well-suited for graph-structured data. OT-based methods have been used in graph alignment~\cite{xu2019scalable,zeng2023parrot,zeng2024hierarchical,yu2025joint,yu2025planetalign}, graph comparison~\cite{petric2019got,titouan2019optimal}, and graph representation learning~\cite{kolouri2021wasserstein,vincent2021online,zeng2023generative}. The Gromov--Wasserstein (GW) distance~\cite{memoli2011gromov,peyre2016gromov} enables comparison between graphs with different node sets and topologies, and defines a metric space where geodesics can be explicitly characterized~\cite{sturm2012space}. Recent work~\cite{scetbon2022linear} further demonstrates how OT couplings can serve as transport maps that align and interpolate between graphs in this space. These advances provide the theoretical foundation for our work, which leverages Fused GW distances to construct geodesic paths between graph domains for GDA.

\section{Limitations and Future Directions}\label{app:lim}
In this paper, we explore the idea of apply GDA for non-IID graph data to handle large graph shifts.
We mainly focus on the unsupervised DA setting, with only one source domain and one target domain.
Based on this limitation, we discuss possible directions and applications to further benefit and extend the current framework, including:
\begin{itemize}
    \item \texttt{Multi-source graph GDA.} In this paper, we focus on the graph DA setting with one labeled source graph and unlabeled target graph. In real-world scenarios, we often have labeled information from multiple domains. Therefore, it would be beneficial to study multi-source graph GDA to leverage information from multiple source graphs.
    \item \texttt{Few-shot graph GDA.} In this paper, we focus on the unsupervised graph DA task where there is no label information for target samples. There may be cases where few target labels are available, and it would be beneficial to leverage such information into the graph GDA process. One possible solution is to leverage the graph mixup techniques~\cite{han2022g,ma2024fused,zeng24geomix} to generate pseudo-labels for intermediate nodes by the linear interpolation of source and target samples.
    \item \texttt{When to adapt.} While we mainly focus on how to best adapt the GNN model, an important question is when to adapt. For example, to what extent the domain shift is large enough to perform GDA? To what extent the domain shift is mild enough to perform direct adaptation or no adaptation. We believe that more powerful graph domain discrepancies such as the FGW distance provide solution to this problem.
    \item \texttt{Scalable GDA via Graph Coarsening.} To extend GADGET to massive-scale graphs (e.g., millions of nodes) where quadratic complexity is prohibitive, a promising future direction is Hierarchical Graph GDA. We plan to incorporate graph coarsening techniques (e.g., spectral clustering or edge contraction) to abstract the original graph into a smaller "super-node" graph. The FGW geodesic and transport plan can be efficiently computed on this coarsened level and then projected back to the original fine-grained graph. This "Coarsen-Align-Refine" strategy aims to reduce the effective number of nodes $n$ in the OT solver, potentially achieving near-linear time complexity while preserving global structural alignment.
\end{itemize}

\end{document}